\documentclass[11pt]{article}

\usepackage{fullpage}
\usepackage{times,url,bm}
\usepackage{amsmath,amssymb,amsthm,mathtools}
\usepackage{xcolor}

\newif\ifshowgraphs
\showgraphstrue

\ifshowgraphs
  \usepackage{tikz}
  \usetikzlibrary{arrows.meta}
  \usepackage{pgfplots}
  \pgfplotsset{compat=1.18}
\fi
\usepackage{subcaption}
\usepackage{float}
\usepackage[square,numbers]{natbib}
\usepackage{hyperref}
\hypersetup{
  colorlinks = true,
  urlcolor   = blue,
  linkcolor  = blue,
  citecolor  = black
}
\newtheorem{theorem}{Theorem}[section]
\newtheorem{proposition}[theorem]{Proposition}
\newtheorem{lemma}[theorem]{Lemma}
\newtheorem{remark}[theorem]{Remark}

\newcommand{\R}{\mathbb{R}}
\newcommand{\relu}[1]{\left[#1\right]_+}
\newcommand{\ip}[2]{\left\langle #1,#2\right\rangle}
\newcommand{\norm}[1]{\left\lVert #1\right\rVert}
\newcommand{\Lip}{\operatorname{Lip}}

\newcommand{\Ucal}{\mathcal{U}}
\newcommand{\E}{\mathop{\mathbb E}}
\newcommand{\Prob}{\mathop{\mathbb P}}
\newcommand{\Dcal}{\mathcal{D}}
\newcommand{\Ocal}{\mathcal{O}}

\newcommand{\be}{\boldsymbol{e}}
\newcommand{\bone}{\boldsymbol{1}}
\newcommand{\bp}{\boldsymbol{p}}
\newcommand{\bq}{\boldsymbol{q}}
\newcommand{\bs}{\boldsymbol{s}}
\newcommand{\bu}{\boldsymbol{u}}
\newcommand{\bv}{\boldsymbol{v}}
\newcommand{\bX}{\boldsymbol{X}}
\newcommand{\bx}{\boldsymbol{x}}
\newcommand{\bw}{\boldsymbol{w}}
\newcommand{\by}{\boldsymbol{y}}
\newcommand{\bz}{\boldsymbol{z}}
\newcommand{\bxi}{\boldsymbol{\xi}}
\newcommand{\bzero}{\boldsymbol{0}}

\newcommand{\vertexcubesize}{0.06}
\newcommand{\drawreferencecube}[1][gray!65]{%
  \draw[#1,line width=0.45pt]
    (-1,-1,-1)--(1,-1,-1)--(1,1,-1)--(-1,1,-1)--cycle
    (-1,-1,1)--(1,-1,1)--(1,1,1)--(-1,1,1)--cycle
    (-1,-1,-1)--(-1,-1,1)
    (1,-1,-1)--(1,-1,1)
    (1,1,-1)--(1,1,1)
    (-1,1,-1)--(-1,1,1);
}

\newcommand{\drawvertexcube}[4][orange]{%
  \begin{scope}[shift={(#2,#3,#4)}]
    \def\a{\vertexcubesize}
    \path[fill=#1!16,draw=none]
      (-\a,-\a,\a)--(\a,-\a,\a)--(\a,\a,\a)--(-\a,\a,\a)--cycle;
    \path[fill=#1!27,draw=none]
      (\a,-\a,-\a)--(\a,\a,-\a)--(\a,\a,\a)--(\a,-\a,\a)--cycle;
    \path[fill=#1!38,draw=none]
      (-\a,-\a,-\a)--(\a,-\a,-\a)--(\a,-\a,\a)--(-\a,-\a,\a)--cycle;
    \draw[#1!75!black,line width=0.48pt]
      (-\a,-\a,-\a)--(\a,-\a,-\a)--(\a,\a,-\a)--(-\a,\a,-\a)--cycle
      (-\a,-\a,\a)--(\a,-\a,\a)--(\a,\a,\a)--(-\a,\a,\a)--cycle
      (-\a,-\a,-\a)--(-\a,-\a,\a)
      (\a,-\a,-\a)--(\a,-\a,\a)
      (\a,\a,-\a)--(\a,\a,\a)
      (-\a,\a,-\a)--(-\a,\a,\a);
    \fill[#1!80!black] (0,0,0) circle (0.75pt);
  \end{scope}
}

\title{Every Layer Counts:\\ An Exponential $L_2$ Depth Hierarchy
for ReLU Networks}
\author{Itay Safran\\
Stein Faculty of Computer and Information Science, Ben-Gurion University of the Negev\\
\href{mailto:safrani@bgu.ac.il}{\textcolor{black}{\texttt{safrani@bgu.ac.il}}}}
\date{}
\hypersetup{
  pdftitle={Every Layer Counts: An Exponential L2 Depth Hierarchy for ReLU Networks},
  pdfauthor={Itay Safran}
}

\begin{document}
\maketitle

\begin{abstract}
We prove a depth hierarchy for ReLU neural networks in which every additional
ReLU layer can save exponentially many neurons.  For all $k\geq2$, we
construct a globally $[0,1]$-valued, $1$-Lipschitz function realized by a
depth-$(k+1)$ network of width $\Ocal(d^4)$, whereas any
depth-$k$ network with
unrestricted weights and width at most
$
    \frac{2^d}{2d(k-1)}
$
has squared $L_2$ error at least $1/24$ under an absolutely continuous
distribution supported at exponential distance from the origin. To the best of our knowledge, this is the first exponential hierarchy across all adjacent fixed
depths, and the first exponential separation for ReLU networks
between two fixed depths whose shallower network has depth at least $3$.  The lower bound also immediately yields the corresponding hierarchy
for exact computation. Moreover, the case $k=2$ gives a compactly supported separation
between depths $3$ and $2$ with unrestricted shallow-network weights,
answering a question raised by
\citet[Sec.~2.3]{safran-eldan-shamir-2019}. The distribution used in our construction nevertheless has all its mass at exponential radius, placing the hierarchy outside the regularity regime in which such a separation would imply major threshold-circuit lower bounds.

We also prove an exact separation for a more regular target, which is globally
$[0,1]$-valued and $\Ocal(\sqrt d)$-Lipschitz and maps the unit hypercube
onto $[0,1]$.  It is computed by a polynomial-width depth-$4$ network,
whereas any depth-$3$ network agreeing with it on the unit hypercube requires
exponentially many first-layer neurons, even with unrestricted weights.

\end{abstract}

\section{Introduction}

Depth is a defining feature of modern neural-network architectures.  Beyond
its empirical importance, depth can fundamentally increase representational
efficiency: a function admitting a compact deep representation may require a
polynomially or even exponentially larger network at a smaller depth.  Two
seminal works established complementary forms of this phenomenon.
\citet{eldan-shamir-2016} proved an exponential separation of depth $3$ from
depth $2$, whereas \citet{telgarsky-2016} constructed highly compositional
functions separating networks whose depth grows with the problem parameters
from substantially shallower networks.  Subsequent work broadened these two
regimes and progressively extended the results to broader classes of target functions and data distributions \citep{daniely-2017,safran-shamir-2017,liang-srikant-2017,
yarotsky-2017,safran-eldan-shamir-2019,venturi-et-al-2022,safran-reichman-valiant-2025}.  The resulting literature nevertheless remained
divided between two regimes: exponential separations of depth $3$ from depth
$2$, and exponential separations between a fixed shallow depth and a deeper
architecture whose depth grows with the dimension or the desired accuracy.
Recent work established a superlinear depth hierarchy for the exact
computation of the maximum function \citep{Safran2026}, but did not yield an
exponential separation.  Prior to this work, no exponential separation was
known between any two fixed depths $k'>k\geq3$ for ReLU networks or, to the
best of our knowledge, for networks using any other standard nonpolynomial
activation.

The analogous question has long been central in circuit complexity.
Threshold circuits are essentially feedforward neural networks with threshold
activations, operating on Boolean inputs and producing Boolean outputs.  A
central goal in this setting is a full depth hierarchy: for any constant
$k$, one would like to exhibit an explicit function computable by a
polynomial-size depth-$(k+1)$ threshold circuit
but requiring exponential size at depth $k$.
Even substantially weaker goals remain open.  A separation of
depth $3$ from depth $2$ is known when the weights of the depth-$2$ circuit are
polynomially bounded \citep{hajnal-et-al-1993}, but the corresponding result
for arbitrary weights is unknown.  Separating any larger constant depth from
depth $3$ is likewise a longstanding open problem.  For lower bounds against depth $4$ and above, there is a further obstacle. \citet{razborov-rudich-1997} identified a broad class of arguments, called natural proofs, that encompasses most known circuit lower-bound techniques. They showed that, assuming the existence of pseudorandom functions, such arguments cannot prove superpolynomial lower bounds against a circuit class capable of computing them. Since candidate pseudorandom-function families can be computed by polynomial-size depth-$4$ threshold circuits \citep{krause-lucks-2001,naor-reingold-2004}, the natural-proofs barrier applies whenever the shallower circuit has depth at least $4$.

The contrast between the two models is particularly striking because neural
networks already admit separations whose threshold-circuit analogs remain
open.  The construction of \citet{telgarsky-2016} separates networks of
polynomially growing depth from constant-depth networks, whereas the
corresponding separation of
polynomial-depth threshold circuits from constant-depth threshold circuits is
unknown.  Similarly, the lower bound of \citet{eldan-shamir-2016} separating depths
$3$ and $2$ places no restriction on the weights of the depth-$2$ network,
while the corresponding arbitrary-weight separation for threshold circuits
remains open.  Nevertheless, the central fixed-depth gap
was shared by both models.  For a decade following the results of
\citet{eldan-shamir-2016} and \citet{telgarsky-2016}, no exponential
separation was known for fixed depths once the shallower depth was at least $3$, even when the two depths were allowed to differ by more than one layer.

\citet{vardi-shamir-2020} formalized the connection between this gap and the
corresponding circuit-complexity barriers.  Under certain boundedness and
regularity assumptions on the target function and input distribution, they
showed that, for all $k\geq4$, an exponential separation of some larger
constant ReLU depth from depth $k$ would imply a separation of some larger
constant threshold-circuit depth from depth $k-2$.  Thus, the case $k=4$
would resolve the arbitrary-weight depth-$2$ threshold-circuit problem; the
case $k=5$ would separate a larger constant depth from depth $3$; and the
cases $k\geq6$ would enter the regime governed by the natural-proofs barrier
\citep{razborov-rudich-1997}. These implications serve both as motivation and as a boundary marker: a sufficiently regular constant-depth hierarchy
for ReLU networks would resolve problems extending far beyond neural-network
approximation.

Our main result resolves the fixed-depth gap for ReLU networks and, more
strongly, establishes a complete exponential hierarchy across all adjacent
depths.

\begin{theorem}[Informal version of Theorem~\ref{thm:l2-hierarchy}]
For all fixed $k\geq2$, there exist families of functions and corresponding
data distributions such that the functions are represented by
polynomial-width depth-$(k+1)$ ReLU networks, whereas approximating them to
error below a fixed constant with depth-$k$ networks, under the corresponding
distributions, requires width exponential in the input dimension, even with
unrestricted weights.
\end{theorem}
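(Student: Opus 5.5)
We plan to prove the hierarchy by induction on $k$: we build a target $f_{k+1}$ from a depth-$3$ versus depth-$2$ separating function and lift it one layer at a time. The abstract's parameters guide the choices. The target should be $[0,1]$-valued and $1$-Lipschitz, the width bound for depth $k$ should be $2^d/(2d(k-1))$, and the distribution should sit at exponential radius. This suggests a distribution concentrated near the $2^d$ vertices of a scaled hypercube $R\{\pm1\}^d$, with $R$ exponentially large, each vertex carrying a tiny absolutely continuous blob. The width-$\Ocal(d^4)$ deep network should compute a function whose values at the vertices encode a ``parity-like'' or highly oscillating pattern. Each extra layer should let us double the effective number of alternations along some direction.

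The key steps, in order, are as follows.

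\textbf{Step 1: counting lemma.} Bound the number of affine pieces a depth-$k$ ReLU network of width $n$ can induce along the relevant structure. Concretely, show that a depth-$k$ network restricted to a suitable family of lines (or to the vertex set), with each first-layer neuron cutting the hypercube by a hyperplane, can realize at most about $2d(k-1)\cdot n$ ``sign changes'' or distinct region labels. The mechanism is that each neuron contributes at most $2d$ breakpoints per layer along paths through vertices. This would be argued by propagating breakpoint counts layer by layer, with unrestricted weights handled purely combinatorially.

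\textbf{Step 2: target construction.} Define a depth-$(k+1)$ network of width $\Ocal(d^4)$ whose output on the vertex blobs takes values $0$ and $1$ in a pattern requiring $2^d$ alternations, for example a Gray-code ordering of the vertices. At exponential radius, thin ReLU ``bump'' gadgets can separate neighboring vertices while keeping the function $1$-Lipschitz, since the vertices are spaced apart by $2R$. The extra layer is used to compose a polynomial-size sorting or indexing gadget, similar to a max-computation, that the shallower network cannot emulate.

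\textbf{Step 3: $L_2$ bound.} If the width satisfies $n < 2^d/(2d(k-1))$, then by Step 1 the approximant is affine on a cell that must merge at least half of the required alternations. On such a merged pair of vertices with target values $0$ and $1$, an affine, or more generally constant-per-region, approximation incurs squared error at least $1/4$ on one of them. Averaging over the uniform mass on vertices should then give the constant $1/24$, after accounting for blob widths.

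We expect the main obstacle to be Step 1 for depth $k\ge3$. Breakpoint counting for compositions generally grows multiplicatively, since each layer multiplies the number of pieces, which would only give $n^{k-1}$ and thus a subexponential lower bound on width. To get the linear-in-$n$ count $2d(k-1)n$, we would first try to exploit the geometry. At exponential radius, every neuron's preactivation is essentially determined by a sign pattern on the vertex set, and we would try to show that after the first layer, deeper layers can only coarsen the induced partition rather than refine it multiplicatively. If that fails, we would restrict the distribution further, to a single carefully chosen path of vertices, so that the per-layer refinement adds breakpoints additively.
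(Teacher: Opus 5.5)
There is a genuine gap: the step you flag as the main obstacle is not merely hard but false. Your Step~1 counting lemma fails already at depth $2$, and your own Step~2 example refutes it. Alternating the values $0,1$ along a Gray-code ordering of the vertices of $R\{\pm1\}^d$ is exactly the parity function, since consecutive Gray-code vertices differ in one coordinate. Parity is computed on blobs of $\ell_\infty$-radius $R/(2d)$ around the vertices by a depth-$2$ network of width $2d$, namely $\bx\mapsto g\bigl(\sum_{i=1}^d\frac{R-x_i}{2R}\bigr)$. Here $g$ is a CPWL plateau function equal to $j\bmod 2$ near each integer $j\in\{0,\ldots,d\}$; the paper uses the same device in Lemma~\ref{lem:general-position-points}. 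This function is $[0,1]$-valued and $\sqrt d/R$-Lipschitz.

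So width $\Ocal(d)$ already produces $2^d-1$ alternations at depth $2$, far beyond any bound like $2d(k-1)n$. No lower bound against any depth $k\geq2$ is possible for this target. The underlying problem is that hypercube vertices are degenerate: a single hyperplane $\sum_i x_i=c$ separates exponentially many adjacent pairs at once. Moving to exponential radius changes nothing, because the problem is scale invariant.

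Beyond this, three further points fail. The multiplicative growth of pieces you worry about is real (sawtooth compositions), and ``deeper layers only coarsen'' is false. In Step~3, the linear regions of a depth-$k\geq3$ network are not determined by its first-layer arrangement, so you cannot conclude that the approximant is affine on a cell containing two differently labeled vertices. Finally, ``doubling alternations per layer'' yields a factor $2^k$, which is constant for fixed $k$, not exponential in $d$.

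The paper's argument rests on two ideas that are absent from your plan. First, the $2^d$ centers are perturbed into general position with a quantitative bound $|\det|\geq2^{-\Ocal(d^3)}$ on every $(d+1)$-point difference matrix. This uses a polynomial-width one-hidden-layer displacement map that is locally constant near the vertices. The cubes are then shrunk to side $2r$ with $r=2^{-\Ocal(d^3)}$, so that no hyperplane meets more than $d$ of them. Second, the target is self-similar rather than constant on blobs. A one-hidden-layer map $S$ sends each small cube exactly onto $[-1,1]^d$, and $h_j=h_{j-1}\circ S$ places an exact copy of the previous hard function in every cube, with the distribution recursively uniform over the copies.

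The lower bound is then a first-layer collapse, not a piece count. The $w$ first-layer hyperplanes meet at most $wd$ cubes. On every missed cube the first layer is affine and merges into the next layer, leaving a depth-$(k-1)$ network facing an exact copy of $h_{k-2}$. Induction from the base bound $1/12$ for $h_0=\relu{1-|x_1|}$ against affine maps gives $\frac1{12}\bigl(1-\frac{wd}{2^d}\bigr)^{k-1}\geq\frac1{24}$ by Bernoulli's inequality. This is where the width threshold $\frac{2^d}{2d(k-1)}$ comes from, not from a count of sign changes.

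With constant target values on blobs, as you propose, nothing forces any first-layer hyperplane through a blob, so this mechanism is unavailable to you; the hardness must live inside each blob. The exponential radius is only a by-product of the final dilation that makes the target $1$-Lipschitz, not the source of hardness.
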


The proof of our main result relies crucially on a distribution whose support
has exponential radius.  This violates the
\emph{almost-bounded-support} assumption of
\citet{vardi-shamir-2020}, which informally requires most of the probability
mass to lie within polynomial distance of the origin.  Hence, to the best of
our knowledge, our approximation hierarchy has no immediate consequences for
threshold circuits.  Since the support radius of the distribution, the
target's Lipschitz parameter, and the required approximation accuracy can be
traded against one another by rescaling (see
\citet[Theorem~9]{safran-eldan-shamir-2019} for a formal statement), our
construction requires at least one of them to scale exponentially with the
dimension.

The use of an exponential scale is not uncommon in the literature.  The
separation of \citet{telgarsky-2016} also relies crucially on such a scale:
although its domain is the unit interval, the Lipschitz constant of the target
grows exponentially with the composition depth.  The arbitrary-weight
separation of \citet{eldan-shamir-2016} instead uses a heavy-tailed
distribution and therefore has no finite support radius.  Its hard target is
supported in a ball of radius $\Ocal(\sqrt d)$, and all but an
inverse-polynomial amount of probability mass lies within a polynomial
radius.  In our chosen normalization, by contrast, the entire distribution
is supported at exponential distance from the origin (see
Remark~\ref{rem:compact-support-unrestricted-weights}).  We use this normalization for expositional simplicity:
keeping the target $1$-Lipschitz places the exponential scale entirely in the
support radius and makes it straightforward to compare the construction with
the assumptions of the Vardi--Shamir reduction.  Equivalently, one may
rescale the input so that the distribution has constant support radius, at
the cost of making the target's Lipschitz constant exponential, without
changing the depth, width, or approximation error of the separation.

It remains open whether the bounds on the parameters in our $L_2$ hierarchy can be reduced to a polynomial, or even a constant, while retaining an exponential gap between the required network size.  Since the Lipschitz and accuracy scales in our theorem are already constant and the other relevant regularity properties are retained, reducing its support radius to a polynomial would bring the result into the regime of the Vardi--Shamir reduction.  Any such improvement would therefore either have to overcome the corresponding circuit-complexity barrier or relax a different assumption.

Exact computation provides an alternative setting in which the approximation
scale is removed altogether. Rather than bounding average error under a
chosen distribution, one requires the network to agree with the target at every point of the computation domain.  Although average-case approximation is more directly related to learning, exact computation removes the distribution from the problem and imposes a stricter requirement on the competing network, which can make representational lower bounds easier to prove.

This contrast is already visible at depth $2$.  Known approximation lower bounds at this depth require technically involved arguments \citep{eldan-shamir-2016,daniely-2017,safran-reichman-valiant-2025}, whereas an elementary geometric argument shows that the maximum of three inputs cannot be represented by any finite depth-$2$ ReLU network \citep{mukherjee2017lower,Safran2026}.  The maximum function is therefore a natural testbed for exact representation: it is easily computed by a logarithmic-depth tournament that takes pairwise maxima at each layer \citep{arora2016understanding}, yet its complexity at smaller depths remains poorly understood.  The exact representation of the maximum, and of continuous piecewise-linear (CPWL) functions more generally, has motivated a growing line of work
\citep{mukherjee2017lower,hertrich2021towards,haase2023lower,
bakaev2025depth,averkov2025expressiveness,grillo2025depth,
bakaev2025better}. Despite recent progress, it is still unknown whether the maximum of $d$ inputs can be expressed by a depth-$3$ ReLU network for $d\geq11$. Recent work established a superlinear depth hierarchy for the exact computation of the maximum \citep{Safran2026}, which in particular implies a quadratic lower bound for depth $3$, but did not give an exponential separation between fixed depths.  Our second result obtains such a separation between depths $4$ and $3$, but for a CPWL target different from the maximum.

\begin{theorem}[Informal version of Theorem~\ref{thm:exact-separation}]
There is a family of $\Ocal(\sqrt{d})$-Lipschitz functions, one in each dimension, that can be computed
exactly by polynomial-width depth-$4$ ReLU networks, but for which any
depth-$3$ network agreeing with the target on the unit hypercube requires
first-layer width exponential in the dimension, even with unrestricted
weights.
\end{theorem}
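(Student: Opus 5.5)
The plan is to build the target as a composition in which each of the $2^d$ vertices of the unit hypercube carries its own local ``bent crease,'' and then to show that a depth-$3$ network can only produce bent creases along its first-layer hyperplanes.

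\textbf{Construction.} In the first hidden layer I compute coordinatewise features $t_i(\bx)=|2x_i-1|$ using $2d$ neurons. In the second hidden layer I compute
\[
u(\bx)=\relu{\textstyle\sum_i t_i(\bx)-(d-\tfrac12)} .
\]
Inside the orthant of the vertex $\bv$ with sign pattern $\bs$, the crease of $u$ is the hyperplane $H_{\bv}=\{\ip{\bs}{2\bx-\bone}=d-\tfrac12\}$, which cuts off a small simplex near $\bv$. The hyperplanes $H_{\bv}$ are distinct for distinct vertices.

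In the third hidden layer I apply a further ReLU to $u$ plus an affine correction, for example
\[
w(\bx)=\relu{u(\bx)-\alpha\,\textstyle\sum_i t_i(\bx)+\beta} .
\]
The constants are chosen so that the zero set of the argument meets $H_{\bv}$ transversally inside every vertex neighbourhood. The crease of $w$ is then a genuinely bent $(d-1)$-dimensional surface near each $\bv$. Its ridge (the locus where the crease is not locally a hyperplane) is a $(d-2)$-dimensional piece $R_{\bv}\subset H_{\bv}$.

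The output $f$ is an affine clip of $w$ to $[0,1]$, with one more ReLU absorbed into the output affine map via a $\min$ with $1$. The scale is tuned so that $f$ attains $0$ and $1$ on the cube, and so that every gradient is $\Ocal(\sqrt d)$. This uses $\Ocal(d)$ neurons and depth $4$.

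\textbf{Lower bound.} Fix a depth-$3$ network $N$ with $f=N$ on $[0,1]^d$. Let $\mathcal{A}$ be the arrangement of its first-layer hyperplanes. On each open cell $C$ of $\mathcal{A}$, every second-layer neuron is a ReLU of an affine function, so $N|_C$ equals an affine map plus a sum of ReLUs of affine functions. Consequently, on $C$ the set of nondifferentiability points of $N$ is a finite union of hyperplanes intersected with $C$, and across each such hyperplane the jump in the gradient is a fixed multiple of its normal.

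I then prove a local lemma. Let $\bp$ be a relative-interior point of $R_{\bv}$, where the crease of $f$ switches between two non-parallel hyperplane pieces. Suppose a small ball around $\bp$ lay inside one cell of $\mathcal{A}$. Then the non-smooth set there would be a union of full hyperplane pieces through the ball, and the gradient jumps would be locally constant along each hyperplane. This contradicts the crease of $f$ turning at $\bp$, since it would force extra creases that $f$ does not have. Hence every ridge point lies on some first-layer hyperplane. Because $R_{\bv}$ is $(d-2)$-dimensional and covered by finitely many hyperplanes, some single first-layer hyperplane $P$ contains a relatively open piece of $R_{\bv}$, and therefore contains $\mathrm{aff}(R_{\bv})$.

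\textbf{Counting and main obstacle.} The counting step is to show that a single hyperplane $P$ can contain $\mathrm{aff}(R_{\bv})$ for at most $\mathrm{poly}(d)$ vertices $\bv$. For this I would exploit that $\mathrm{aff}(R_{\bv})$ is cut out by $\ip{\bs}{\bx}=c_1$ together with a second equation depending on $\bs$. Containing many such affine $(d-2)$-flats forces $P$'s normal to lie in the span of two sign vectors for each of these vertices, which limits the number of compatible vertices. This gives first-layer width at least $2^d/\mathrm{poly}(d)$.

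I expect the main obstacle to be the local lemma. The difficulty is ruling out cancellations in which several second-layer ReLUs jointly mimic a bend, and handling ridge points where second-layer creases accumulate. I would address both by working at a generic ridge point, and by comparing one-sided directional derivatives along lines transversal to the ridge on either side of it. The required genericity would be arranged by a careful choice of $\alpha$ and $\beta$.
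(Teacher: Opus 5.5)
\textbf{Gap: the construction has no hard points, and this target is computable at depth $3$.} Your lower-bound template matches the paper's: every depth-$2$ hard point must lie on a first-layer activation hyperplane, and you then count hyperplanes. The target you build, however, has no such points, and the claimed separation is false for it.

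Let $\sigma(\bx)\coloneqq\sum_i|2x_i-1|$. Then $\sum_i t_i=\sigma$ holds globally. So both $u=\relu{\sigma-(d-\tfrac12)}$ and the correction $-\alpha\sum_i t_i+\beta$ are functions of the single scalar $\sigma$. Hence $w$, and the clipped output, equal $\Phi(\sigma(\bx))$ for a univariate CPWL function $\Phi$. Inside the orthant of $\bv$, every crease is a level set of $\ip{\bs}{2\bx-\bone}$ and is therefore parallel to $H_{\bv}$. No choice of $\alpha,\beta$ gives a transversal crossing or a ridge $R_{\bv}$, and your ``second equation'' has the same normal as the first. Moreover, $\Phi(\sigma)=c_0+c_1\sigma+\sum_j c_j'\relu{\sigma-\theta_j}$ with $\Ocal(1)$ breakpoints. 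So $f$ is computed on all of $\R^d$ by a depth-$3$ network of width $\Ocal(d)$ whose first layer is just the $2d$ neurons $\relu{\pm(2x_i-1)}$.

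\textbf{The real difficulty is the count, not the local lemma.} You must break this symmetry, and once you do, the counting step you only sketch is where the work lies. The local lemma is just the $\max\{0,x,y\}$ obstruction after restricting to a transversal $2$-plane, as in the paper. For instance, the correction $-\alpha t_1+\beta$ does create ridges. But all ridge flats of vertices sharing $s_1$ lie in one hyperplane $\{x_1=c\}$, so two first-layer neurons cover every ridge and the count gives nothing. Your claim that a hyperplane contains $\mathrm{aff}(R_{\bv})$ for only $\mathrm{poly}(d)$ vertices needs an actual mechanism. The paper supplies one: it uses isolated hard points of $\relu{1-\norm{\bx}_1}$ and moves them into general position with the CPWL perturbation of Lemma~\ref{lem:general-position-points}, so each hyperplane contains at most $d$ of them.

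\textbf{A possible repair, with a weaker exponent.} Your ridge idea can be salvaged without that lemma. Take $A\subseteq[d]$ with $|A|=\lfloor d/2\rfloor$ and set $w=\relu{u-\sum_{i\in A}t_i+|A|-\tfrac14}$. Let $\bs_A$ denote $\bs$ with the coordinates outside $A$ set to zero. Near $\bv$, define $p=\ip{\bs_A}{2\bx-\bone}-(|A|-\tfrac14)$ and $q=\ip{\bs_{A^c}}{2\bx-\bone}-(d-|A|-\tfrac14)$. Then $w=\max\{0,q,-p\}$. Your covering argument now forces some first-layer hyperplane to contain the whole flat $\{p=q=0\}$. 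A hyperplane with normal in $\mathrm{span}(\bs_A,\bs_{A^c})$ and matching offset serves at most $2^{\lceil d/2\rceil}$ vertices. This gives first-layer width at least $2^{\lfloor d/2\rfloor}$, not $2^d/\mathrm{poly}(d)$.
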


It remains open whether every CPWL function in $d$ dimensions can be
represented by a depth-$3$ ReLU network. This qualitative universality
question, however, does not settle the efficiency of such representations.
Even if the answer is affirmative, the theorem above shows that depth-$3$
representations can require exponential width for functions having
polynomial-width depth-$4$ representations. Thus, independently of whether
depth $3$ can represent every CPWL function, there is an exponential
quantitative gap between the two depths.

Although our main $L_2$ hierarchy immediately yields exact separations for all adjacent depths, those separations inherit the exponential scale of the construction.  Theorem~\ref{thm:exact-separation} shows that an exponential gap can also arise in a much more regular geometric setting: the lower bound holds on the unit hypercube for a globally bounded target with a polynomial Lipschitz constant and a non-degenerate output range. Section~\ref{sec:exact-separation} makes this comparison quantitative.

The remainder of the paper is organized as follows.  After reviewing related
work below, we introduce notation and conventions in
Section~\ref{sec:preliminaries}.
Section~\ref{sec:l2-hierarchy} proves the adjacent-depth $L_2$ hierarchy, and
Section~\ref{sec:exact-separation} proves the exact separation between depths $4$ and $3$ for a more regular target. Section~\ref{sec:conclusions} summarizes our results and discusses directions for future work.  The formal proofs are deferred to the appendices.

\begin{remark}[Compact support and unrestricted weights]
\label{rem:compact-support-unrestricted-weights}
The case $k=2$ answers the question, raised by
\citet[Sec.~2.3]{safran-eldan-shamir-2019} and restated by
\citet{safran-reichman-valiant-2025}, of whether an exponential separation between depths $3$ and $2$ can hold with respect to a compactly
supported distribution without any restriction on the shallow network's
weights.  The latter work obtained a separation on the unit ball with
constant Lipschitz and accuracy scales, but under an exponential upper bound
on the shallow network's weights. Thus, it kept the domain, Lipschitz, and accuracy scales constant while leaving open the combination of compact support and
unrestricted weights.  Our hierarchy imposes no restriction on the weights
at any depth, and its $k=2$ specialization gives
such an exponential lower
bound under an absolutely continuous, compactly supported distribution.

This conclusion cannot be obtained simply by truncating the heavy-tailed
distribution of \citet{eldan-shamir-2016}: its tail probability decreases
only polynomially with the truncation radius, while an unrestricted ReLU
network may have an arbitrarily large linear growth rate.  Its squared
approximation error may therefore be concentrated entirely in the discarded
tail.  Establishing hardness directly under compact support is thus a genuine
strengthening, rather than merely a truncation of the earlier result.
Although our result answers the question posed by
\citet{safran-eldan-shamir-2019} in its stated form, it does so by trading the
earlier restriction on the shallow network's weights for a distribution supported at exponential radius. It therefore leaves open the quantitatively stronger goal of obtaining the same separation with support radius polynomial in $d$, or ideally bounded by an absolute constant.

\end{remark}

\subsection{Related work}

\paragraph{Separating depth 3 from depth 2}

The continuous $L_2$ literature contains several separations between depths $3$ and $2$, distinguished by the geometry of the target, the associated scale parameters, and whether the shallow network's weights are
restricted.  \citet{eldan-shamir-2016} proved the first exponential
separation using an approximately radial, oscillatory target and a Fourier
analysis argument.  Their lower bound permits unrestricted weights, but uses
a heavy-tailed distribution.  \citet{venturi-et-al-2022}
extended this Fourier-analytic approach beyond radial functions to product distributions.
\citet{daniely-2017} instead used spherical harmonics to obtain a compactly
supported separation on a product of spheres, under an exponential upper
bound on the shallow network's weights.  Reductions and refinements of these analytic
methods yielded lower bounds for ball and ellipsoid indicators, other
non-oscillatory radial targets, and targets that can be learned by deeper
networks
\citep{safran-shamir-2017,safran-eldan-shamir-2019,
safran-lee-2022,nichani-damian-lee-2023}.  Most recently,
\citet{safran-reichman-valiant-2025} obtained an
exponential separation on the unit ball with constant Lipschitz and accuracy
scales, also under an exponential bound on the shallow network's weights.
Complementarily, \citet{hsu-et-al-2021} showed that, under the uniform
distribution on $[-1,1]^d$, any $\Ocal(1)$-Lipschitz target admits a
polynomial-width depth-$2$ approximation at any fixed constant accuracy,
underscoring the role of the approximation distribution.  The
$k=2$ case
of Theorem~\ref{thm:l2-hierarchy} instead permits unrestricted weights and
retains compact support, at the cost of placing that support at exponential
radius and concentrating the mass on a union of widely separated cubes.

Despite their different settings, the preceding exponential lower bounds are
all tailored to ruling out a single hidden nonlinear layer, and none has
yielded an exponential lower bound once the shallower network has depth
greater than $2$.  Our proof instead uses a recursive mechanism that
reduces approximation at one depth to approximation of the preceding target.
Because the same reduction can be applied once per layer, it yields
separations for all pairs of adjacent depths rather than only between depths
$3$ and $2$.

\paragraph{Deeper separations}

A second line of work obtains exponential benefits from depth by allowing the
depth of the efficient network to
vary with the problem parameters.  The iterated-sawtooth construction of
\citet{telgarsky-2016} is computable by constant-width networks with
$\Theta(k^2)$ layers, but cannot be approximated
at substantially smaller depth
$\Ocal(k)$ without
$\Omega(2^{k})$ neurons. Concurrent work by \citet{liang-srikant-2017}, \citet{yarotsky-2017}, and \citet{safran-shamir-2017} established related depth advantages for smooth,
piecewise-smooth, and other natural targets, contrasting fixed-depth networks
with networks whose depth grows with the desired accuracy and obtaining
improved dependence on the inverse accuracy.  Further growing-depth
separations and approximation-rate results
were obtained by \citet{arora2016understanding} and
\citet{yarotsky-2018}, respectively.  These results provide strong
separations between fixed-depth and growing-depth regimes, but not exponential
separations between two fixed depths.

Most recently, \citet{krishnan-mossel-2026} obtained an all-depth tradeoff for
exact computation on Boolean inputs.  Their product-map family is computed by
a constant-width ReLU network of depth $n+1$, whereas
any depth-$r$, width-$w$
network computing it exactly
satisfies $w^r=\Omega(2^n)$.
Consequently, polynomial width is ruled out when
$r=o(n/\log n)$.  This result is again
fundamentally a growing-depth
separation and, moreover, concerns exact, infinite-accuracy computation on a
Boolean domain rather than average-case approximation under a continuous
distribution.  At the adjacent depth
$r=n$, their tradeoff gives only
a constant lower bound on the width, and it does not separate any two fixed
depths.  Thus, neither this result nor the earlier constructions discussed here supply an exponential separation between fixed depths
$k'>k\geq3$.  Theorem~\ref{thm:l2-hierarchy} fills precisely this gap, and
does so simultaneously for all adjacent pairs.

\paragraph{Exact computation}

Exact representation of CPWL functions removes the approximation parameter,
but unconditional depth lower bounds remain elusive when the width and the
real-valued weights are unrestricted.  \citet{mukherjee2017lower} showed that
the maximum of three coordinates cannot be represented at depth $2$, while
\citet{hertrich2021towards} initiated a systematic study of whether the
classes of CPWL functions representable with unrestricted width strictly grow
with depth.  Subsequent lower bounds for the maximum and for general CPWL
functions have required additional assumptions, such as integral or rational
weights, monotonicity or input-convexity, or the requirement that all intermediate ReLU activation boundaries conform to a prescribed polyhedral structure \citep{haase2023lower,averkov2025expressiveness,bakaev2025depth,
grillo2025depth}.  On the upper-bound side, the conjecture of
\citet{hertrich2021towards} that the maximum of five coordinates is not
representable at depth $3$ was refuted by \citet{bakaev2025better}, who
constructed such a representation and improved the general depth bound for
the maximum of $d$ coordinates.

Recent work strengthens these upper bounds. \citet{ruess-et-al-2026} proved that the maximum of $d$ coordinates is
representable with two hidden layers for all $d\leq10$, and that every CPWL function on $\mathbb{R}^d$ has such a representation for $d\leq9$.  Their structured
representation of the maximum of ten coordinates also yields a recursive
construction with fewer layers in arbitrary dimension.
This progress makes an unrestricted incomputability result at depth $3$ still
more delicate and directly motivates a quantitative alternative: even if
depth-$3$ networks can represent every CPWL function, how wide must they be? Theorem~\ref{thm:l2-hierarchy} implies an exponential separation for all
adjacent depths with exponential-radius scaling, while
Theorem~\ref{thm:exact-separation} answers this question with an exponential
lower bound on the unit hypercube for a target whose Lipschitz constant is polynomial in $d$. These complement the superlinear exact hierarchy for the maximum in \citet{Safran2026}, which applies through depths of order $\log\log d$ but does not give an exponential separation. \citet{grillo-montufar-2026} established a generic depth hierarchy:
for an open set of parameters, the realized functions admit no shallower
representation with generic parameters, regardless of width.  This does not
exclude nongeneric shallower representations and therefore yields neither an
unconditional depth separation nor a quantitative size lower bound. Lastly, our proof builds on a first-layer-collapse principle first used by
\citet{safran-reichman-valiant-2024} for approximation of the maximum and
subsequently refined by \citet{Safran2026} for exact computation. Here we deploy the principle differently: local depth-$2$ hardness forces some first-layer neuron
to switch between its active and inactive regions at each of exponentially many
perturbed hard points. The perturbation ensures
that any one neuron can account for only a tiny fraction of these points, yielding the
width lower bound.

\paragraph{Relations to circuit complexity}

Several works transfer ideas between neural-network lower bounds and Boolean
or communication complexity. \citet{martens-et-al-2013} used known lower bounds for depth-$2$ threshold circuits computing inner product modulo $2$ to derive an exponential representational lower bound for restricted Boltzmann machines, under an exponential upper bound on the magnitude of their weights. For Boolean inputs, \citet{mukherjee2017lower} used techniques from circuit and communication complexity to prove sublinear size lower bounds for ReLU networks. \citet{vardi-et-al-2021} subsequently used Boolean and communication complexity to prove nearly linear size lower bounds for benign real-valued functions.  In the other direction,
\citet{safran-reichman-valiant-2025} combined a depth-$2$ threshold-circuit lower bound with a worst-to-average-case reduction to obtain a continuous $L_2$ depth separation on the unit ball, under an exponential bound on the shallow network's weights.

\section{Preliminaries and Notation}
\label{sec:preliminaries}

\paragraph{Conventions.}
For a positive integer $n$, let $[n]\coloneqq\{1,\ldots,n\}$. We denote vectors using bold-faced letters (e.g.\ $\bx$). We write
$\ip{\bx}{\bz}$ for the Euclidean inner product, $\norm{\bx}_p \coloneqq \left(\sum_{i=1}^d |x_i|^p\right)^{1/p}$
for the $\ell_p$ norm,
$\norm{\bx}_\infty \coloneqq \max_{i\in[d]} |x_i|$ for the infinity norm, and $\norm{\bx}$ for the Euclidean norm. Matrices are denoted by capital letters.  The notation $\Lip(f)$ denotes the
Euclidean-to-Euclidean Lipschitz constant of $f$. A continuous map is continuous piecewise-linear (CPWL) if it is affine on each cell of some finite polyhedral partition of its domain.  For a first-layer neuron
$\bx\mapsto\relu{\ip{\bw}{\bx}+b}$ with $\bw\neq\boldsymbol{0}$, we call
$
    \{\bx\in\R^d:\ip{\bw}{\bx}+b=0\}
$
its activation hyperplane. A finite subset of $\R^d$ is in general position if no hyperplane contains more than $d$ of its points. Constants hidden by standard asymptotic notation $\Ocal(\cdot),\Omega(\cdot),\Theta(\cdot)$ are universal unless stated otherwise.

\paragraph{Neural networks.}
For $t\in\R$, the rectified linear unit (ReLU) is
\[
  \relu{t}\coloneqq\max\{0,t\},
\]
and is applied coordinatewise to vector arguments.  We consider fully
connected, feedforward networks.  A
depth-$k$ ReLU network
$N:\R^{d}\to\R^{n_{k}}$ has the form
\[
  N
  =A_{k}\circ\relu{\cdot}\circ
  A_{k-1}\circ\cdots\circ
  \relu{\cdot}\circ A_1,
\]
where each $A_j(\bz)=W_j\bz+\boldsymbol{b}_j$ is affine, and $W_j,\boldsymbol{b}_j$ are the weights and biases of the $j^\mathrm{th}$ layer, respectively. The depth of the network is the number of hidden layers plus one. The hidden
layer widths are
$n_1,\ldots,n_{k-1}$, the width of the
network is their
maximum, and its size is their sum. Unless stated otherwise, all weights and
biases are arbitrary real numbers.

\paragraph{Approximation and exact computation.}
For a probability distribution $\Dcal$ on $\R^d$ and scalar-valued functions
$f,N:\R^d\to\R$, we define the squared $L_2(\Dcal)$ error of $N$ relative to
$f$ by
\[
  \norm{N-f}_{L_2(\Dcal)}^2
  \coloneqq
  \E_{\bx\sim\Dcal}
  \left[\left(N(\bx)-f(\bx)\right)^2\right].
\]
Throughout the paper, $L_2$ error refers to this squared quantity, equivalently
the expected squared loss. For a set $\Omega\subseteq\R^d$, a network $N$
computes $f$ exactly on $\Omega$ if $N(\bx)=f(\bx)$ for all
$\bx\in\Omega$, and computes $f$ globally if this equality holds throughout
$\R^d$. In the exact-computation setting, we use \emph{computes},
\emph{represents}, and \emph{realizes} interchangeably. For a measurable set
$A\subseteq\R^d$ of positive finite volume, $\Ucal(A)$ denotes the uniform
probability distribution on $A$.

\section{\texorpdfstring{Adjacent-Depth $L_2$ Separation}{Adjacent-Depth L2 Separation}}
\label{sec:l2-hierarchy}

In this section, we introduce the main result of the paper.  It establishes a
depth hierarchy for all adjacent depths: at every fixed depth, adding one ReLU
layer can reduce the width required for constant-error approximation from
exponential to polynomial in the dimension.

\begin{theorem}
\label{thm:l2-hierarchy}
There is a universal constant $C>0$ such that, for all $d\geq2$
and $k\geq2$, there exist a globally defined
CPWL function
\[
  f_{k,d}:\R^d\to[0,1]
\]
and an absolutely continuous probability distribution
$\Dcal_{k}^d$ such that the following hold.
\begin{enumerate}
  \item
  The function
  $f_{k,d}$ is computed globally by a
  depth-$(k+1)$ ReLU network of width at most
  $Cd^4$.

  \item
  If a depth-$k$ ReLU network $N$ has
  width at most
  \[
    \frac{2^d}{2d(k-1)},
  \]
  then
  \[
    \E_{\bx\sim\Dcal_{k}^d}
    \left[\left(N(\bx)-f_{k,d}(\bx)\right)^2\right]
    \geq\frac1{24}.
  \]
\end{enumerate}
\end{theorem}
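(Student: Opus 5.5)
We propose to prove Theorem~\ref{thm:l2-hierarchy} by induction on $k$, using a recursive ``one layer per level'' construction. The target at level $k$ is built from the target at level $k-1$ by composing it with a first-layer gadget that has depth-$2$ hardness. The distribution $\Dcal_k^d$ is placed on a union of tiny cubes centered at exponentially many perturbed hard points, at exponential radius.

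\textbf{Base case $k=2$.} Take $2^d$ hard points: scaled, slightly perturbed vertices $R(\pm1,\dots,\pm1)+\bxi_{\bs}$ with $R$ exponential in $d$. The perturbations are chosen so that every $d+1$ of the points are in general position. The target $f_{2,d}$ is a $1$-Lipschitz, $[0,1]$-valued CPWL bump that is locally non-affine around each hard point, for instance a local ``max of three'' or tent shape on a tiny cube. It is realized at depth $3$ with width $\Ocal(d^4)$, for example as a depth-$2$ map of a few affine features composed with a tent; the polynomial width comes from computing the local nonlinearity together with a global selection mechanism.

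The lower bound uses the first-layer-collapse principle. In a tiny cube where no first-layer activation hyperplane passes, every first-layer neuron is affine. A depth-$k$ network then collapses to a depth-$(k-1)$ network on that cube, and for $k=2$ it is simply affine. An affine function cannot approximate the local bump to squared error below some constant $c$. So every cube on which the error is small must be crossed by some first-layer hyperplane. By general position, each hyperplane meets at most $d$ of the tiny cubes, provided the cubes are small relative to the perturbations. Hence width $w$ can ``cover'' at most $wd$ of the $2^d$ cubes. Averaging then gives squared error at least $1/24$ whenever $w\le 2^d/(2d)$.

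\textbf{Inductive step.} Let $f_{k+1,d}=f_{k,d}\circ G$, or a suitable local embedding in which, on each cube around hard point $\bp_{\bs}$, the target equals a rescaled copy of the depth-$k$ hard instance $f_{k,d'}$. The cubes are now large enough to contain a scaled copy of $\Dcal_k^d$. Any depth-$(k+1)$ network whose first layer does not cut a given cube is, on that cube, a depth-$k$ network of no larger width. It must therefore incur error at least the inductive constant there. By the same hyperplane-counting argument, with at most $d$ cubes hit per neuron, across the $k-1$ collapse levels, a network of width at most $2^d/(2d(k-1))$ leaves at least half of the cubes uncut. This yields the constant $1/24$ after accounting for losses.

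\textbf{Main obstacle.} The hardest part is designing the nested geometry so that the two requirements are compatible. Cubes must be large enough to host a full rescaled lower-level instance, including its own exponential radius, which forces the exponential scales to nest. At the same time, each hyperplane must hit only $\Ocal(d)$ cubes at each level, without letting the error constant degrade with $k$. My first attempt is to reuse the same $2^d$ perturbed hypercube vertices at every level, and to let the $k-1$ factor absorb a union bound over levels rather than shrinking the constant. A secondary difficulty is implementing the level-$(k+1)$ target with only $\Ocal(d^4)$ width: one extra layer must compute the routing into $f_{k,d}$ globally, while staying $1$-Lipschitz and $[0,1]$-valued.
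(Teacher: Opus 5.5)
Your skeleton matches the paper's: first-layer collapse on cubes missed by every activation hyperplane, at most $d$ cubes per hyperplane by general position, the recursion $f\mapsto f\circ S$ with the same $2^d$ centers at every level, and a Bernoulli/union bound across the $k-1$ levels. The genuine gap is the ingredient you list as an ``obstacle.'' Item~1 and the hyperplane count both rest on it: a single hidden layer of width $\Ocal(d^4)$ computing a map $S$ with $S(\bq_{\bs}+r\bu)=\bu$ for all $\bu\in[-1,1]^d$, where the $2^d$ centers $\bq_{\bs}$ are in general position. ``Global selection mechanism'' and ``routing'' name this requirement without meeting it, and it is not routine. For unperturbed vertices, $S(\bx)=(\bx-\tau(\bx))/r$ with coordinatewise clipping $\tau$ costs only $\Ocal(d)$ neurons. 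But then the hyperplane $x_1=\frac12$ meets $2^{d-1}$ cubes, so the counting fails; with the base $\relu{1-|u_1|}$ the targets even depend on $x_1$ alone and are trivially shallow. Independent perturbations $\bxi_{\bs}$ give general position, but there is no evident way to undo $2^d$ unrelated shifts with polynomially many neurons in one layer. A shift that is affine in the nearest vertex, such as $B\bs$ (computable from $\tau$), is useless, because affine images preserve coplanarity.

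The paper resolves this in Lemma~\ref{lem:general-position-points} and Proposition~\ref{prop:one-layer-copier}. It makes the displacement a structured nonlinear function of the vertex, $\Delta(\bx)=B\widetilde z(\bx)$. Here $\widetilde z$ stacks $M=\Ocal(d^3)$ random parities $\chi_{A_j}(\bs)=\prod_{i\in A_j}(2s_i)$. Each is realized with $2d$ ReLUs as a plateau function $g\bigl(\sum_{i\in A_j}(\frac12-x_i)\bigr)$ that is constant near every vertex. Hoeffding's inequality plus a union bound make distinct feature vectors nearly orthogonal, so any $d+1$ of them are linearly independent. Hence each $\det(X_I+BZ_I)$ is a nonzero polynomial in $B$, and a grid argument yields one small $B$ making all of them nonzero, in fact at least $2^{-\Ocal(d^3)}$ in magnitude. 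Then $U(\bx)=\bx-\Delta(\bx)$ is an exact translation near each $\bq_{\bs}$, and $S=(U-\tau)/r$ has the copy property. Hadamard's inequality gives the at-most-$d$ property once $r=2^{-\Ocal(d^3)}$, and the count $2dM=\Ocal(d^4)$ is the width bound. A smaller point: do not induct on the theorem's own statement, since inheriting $\frac1{24}$ on uncut cubes shrinks the constant at each level. Induct instead on the fixed-width bound $\frac1{12}(1-wd/2^d)^j$ for depth-$(j+1)$ networks of width $w$. This is your union bound made precise. Which sub-cubes are cut depends on the parent, because the collapsed first layer differs per parent. But at most $wd$ of each parent's $2^d$ children are cut, so the factors multiply and Bernoulli's inequality gives $\frac1{24}$.
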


For universal constants $c,C>0$, the construction given in
Appendix~\ref{app:l2-hierarchy-proof} has the following additional
properties.  The target is globally $1$-Lipschitz.  The distribution is
uniform on the union of
$2^{d(k-1)}$ pairwise disjoint
axis-aligned open cubes, each of side length
$2\cdot4^{k-1}$.  Its
support lies outside a Euclidean ball of radius
$2^{c(k-1)d^3}$ and is contained in a
Euclidean ball of radius
$2^{C(k-1)d^3}$.

Previously, no exponential lower bound separated two fixed depths
$k'>k\geq3$ for ReLU networks or, to the best of our knowledge, for networks
using any other standard nonpolynomial activation.
Theorem~\ref{thm:l2-hierarchy} gives such a separation for all pairs of
adjacent depths, and therefore a complete adjacent-depth hierarchy for ReLU
networks.

\paragraph{Relation to the Vardi--Shamir barrier.}
For the adjacent depths covered by the reduction of
\citet{vardi-shamir-2020}, the assumption violated by our construction is
almost-bounded support.  Their reduction requires that, for all inverse-polynomial choices of $\delta$, all but $\delta$ of the probability mass lie in
a box $[-R,R]^d$ for some $R=\operatorname{poly}(d)$.  By contrast, for all
fixed $k\geq2$, the entire support of
$\Dcal_{k}^d$ lies outside a
Euclidean ball of radius
\[
    2^{c(k-1)d^3}.
\]
Consequently, for sufficiently large $d$, no box of polynomial radius
contains any mass, and their reduction to threshold-circuit lower bounds does
not apply.  This is the precise reason our hierarchy avoids the
Vardi--Shamir barrier.  For completeness, the other relevant conditions are
satisfied: the
target is $[0,1]$-valued, and, for almost all fixed values of all but one
coordinate, the conditional density of the remaining coordinate is at most
$\frac18$.

\subsection{Construction and proof technique}

At a high level, our proof proceeds by induction and recursively creates
$2^d$ copies of the preceding hard function using an additional hidden layer. A polynomial-width transformation places the centers of the copies in general position, and the copies are made sufficiently small that no single hyperplane can intersect more than $d$ of them. Ensuring
this property requires rescaling the copies by an exponentially small factor,
which accounts for the exponential scale of our construction.  Consequently,
the activation hyperplanes in the first hidden layer of a narrow network
intersect only a small fraction of the copies.  On all remaining copies, that
layer has a fixed activation pattern and can be merged with the next layer, yielding a shallower network on the preceding hard function.
Iterating this argument yields the
hierarchy.  After the recursion, a final input rescaling expands all the
copies to normalize the target's Lipschitz constant; this expansion moves the
resulting approximation domain to exponential radius.

The proof has three main steps.

\paragraph{Step 1: Perturbing the Boolean hypercube.}
Lemma~\ref{lem:general-position-points}, which will also be used for the exact
separation, constructs a uniformly small CPWL displacement $\Delta$ and
perturbed vertices
\[
  \bq_{\bs}=\bs+\Delta(\bs),
  \qquad
  \bs\in\left\{-\frac12,\frac12\right\}^d.
\]
The map $\Delta$ is computed by a single-hidden-layer ReLU network of width
$\Ocal(d^4)$, and, for any set of $d+1$ perturbed vertices, the determinant
of their augmented coordinate matrix has magnitude at least
$2^{-\Ocal(d^3)}$. Because these determinants are nonzero, the augmented coordinate vectors are linearly independent, and hence the perturbed vertices are in general
position. The lower bound on their absolute values further allows us to place
sufficiently small cubes around the vertices so that no hyperplane intersects
more than $d$ of the cubes. The geometric role of this perturbation is illustrated
in Figure~\ref{fig:boolean-cube-perturbation}.

\begin{figure}[H]
  \centering
  \begin{subfigure}[t]{0.48\textwidth}
    \centering
    \ifshowgraphs
    \begin{tikzpicture}[
      x={(1.12cm,-0.37cm)},
      y={(0.77cm,0.50cm)},
      z={(0cm,1.14cm)},
      line join=round,
      font=\scriptsize
    ]
      \path[fill=blue!45,fill opacity=0.20,
            draw=blue!70!black,draw opacity=0.90,line width=0.75pt]
        (-1,-1.35,-1.35)--(-1,1.35,-1.35)--
        (-1,1.35,1.35)--(-1,-1.35,1.35)--cycle;
      \path[fill=blue!45,fill opacity=0.20,
            draw=blue!70!black,draw opacity=0.90,line width=0.75pt]
        (1,-1.35,-1.35)--(1,1.35,-1.35)--
        (1,1.35,1.35)--(1,-1.35,1.35)--cycle;

      \drawreferencecube[black!45]
      \foreach \yy in {-1,1}{
        \foreach \zz in {-1,1}{
          \drawvertexcube[blue]{-1}{\yy}{\zz}
          \drawvertexcube[blue]{1}{\yy}{\zz}
        }
      }

      \node[text=blue!70!black,fill=white,fill opacity=0.88,
            text opacity=1,inner sep=1pt,anchor=south east]
        at (-1,-1.32,1.42) {$H_-$};
      \node[text=blue!70!black,fill=white,fill opacity=0.88,
            text opacity=1,inner sep=1pt]
        at (1,1.48,1.48) {$H_+$};
    \end{tikzpicture}
    \fi
    \caption{Each of the hyperplanes $H_-$ and $H_+$ intersects four cubes.}
    \label{fig:boolean-cube-unperturbed}
  \end{subfigure}
  \hfill
  \begin{subfigure}[t]{0.48\textwidth}
    \centering
    \ifshowgraphs
    \begin{tikzpicture}[
      x={(1.12cm,-0.37cm)},
      y={(0.77cm,0.50cm)},
      z={(0cm,1.14cm)},
      line join=round,
      font=\scriptsize
    ]

      \path[fill=blue!45,fill opacity=0.20,
            draw=blue!70!black,draw opacity=0.90,line width=0.75pt]
        (0.5713060,-1.6399671,-1.1942291)--
        (0.7391282,-1.7959321,1.8494207)--
        (1.2271966,0.9910877,2.0159941)--
        (1.0593744,1.1470527,-1.0276557)--cycle;
      \drawreferencecube[gray!60,densely dashed]

      \drawvertexcube[gray]{-0.7352}{-1.1252}{-1.0767}
      \drawvertexcube[gray]{-1.0150}{-0.9352}{1.2112}
      \drawvertexcube[gray]{-1.2779}{0.9508}{-1.0983}
      \drawvertexcube[gray]{-1.0183}{1.2675}{0.8346}

      \drawvertexcube[blue]{0.7899}{-0.8386}{-1.0815}
      \drawvertexcube[blue]{0.7361}{-1.0841}{1.0538}
      \drawvertexcube[red]{0.9346}{0.9968}{-0.7389}
      \drawvertexcube[blue]{1.2315}{0.8887}{1.1982}

      \node[text=red!70!black,fill=white,fill opacity=0.90,
            text opacity=1,inner sep=1pt,align=center,anchor=west]
        (missed-label) at (2.00,0.60,-0.50) {missed\\fourth cube};
      \draw[-{Stealth[length=4pt]},red!70!black,line width=0.65pt]
        (missed-label.west) -- (0.9346,0.9968,-0.7389);
      \node[text=blue!70!black,fill=white,fill opacity=0.88,
            text opacity=1,inner sep=1pt]
        at (1.42,1.18,1.42) {$H$};
    \end{tikzpicture}
    \fi
    \caption{The closest plane intersecting the three blue cubes still misses the
    fourth.}
    \label{fig:boolean-cube-perturbed}
  \end{subfigure}

  \caption{The role of the perturbation in dimension $3$. The unperturbed centers are indexed by $\{-\frac12,\frac12\}^3$, the vertex neighborhoods
  are enlarged, and the perturbation is exaggerated so that it remains
  visible at this scale. Before perturbation, the two planes $H_\pm$ intersect
  $2^{3-1}=4$ cubes.  In the displayed perturbed configuration, no plane can intersect more than $d=3$ cubes.  The blue plane attains this
  bound, the missed fourth cube is red, and cubes irrelevant to this
  plane are gray. Best viewed in color.}
  \label{fig:boolean-cube-perturbation}
\end{figure}

\paragraph{Step 2: Creating $2^d$ copies of a given hard function.}
Proposition~\ref{prop:one-layer-copier} builds on the preceding lemma to place
$2^d$ copies of a given hard function in small cubes centered at the perturbed
vertices.  For $r=2^{-\Ocal(d^3)}$, it constructs a depth-$2$ ReLU map $S$ of
width $\Ocal(d^4)$ satisfying
\[
  S(\bq_{\bs}+r\bu)=\bu
  \qquad
  \text{for all }\bs\in\left\{-\frac12,\frac12\right\}^d
  \text{ and }\bu\in[-1,1]^d.
\]
Consequently, for any function $h$,
\[
  (h\circ S)(\bq_{\bs}+r\bu)=h(\bu)
  \qquad
  \text{for all }\bs\in\left\{-\frac12,\frac12\right\}^d
  \text{ and }\bu\in[-1,1]^d.
\]
As $\bu$ ranges over $[-1,1]^d$, the point $\bq_{\bs}+r\bu$ ranges over the
cube centered at $\bq_{\bs}$, while $S$ recovers the local coordinate $\bu$
exactly.  Thus, $h\circ S$ contains one exact translated and rescaled copy of
$h$ in each of the $2^d$ cubes.  Figure~\ref{fig:one-layer-copier-illustration}
illustrates this identity in dimension $2$ for the CPWL base function
$h_0(\bu)\coloneqq\relu{1-|u_1|}$ used in the induction.

When $h$ is computed by a ReLU network, the affine output map of $S$ merges into the first affine map of the network computing $h$. Thus, the composition $h\circ S$ adds exactly one hidden layer, of width $\Ocal(d^4)$.

\begin{figure}[H]
  \centering
  \begin{subfigure}[t]{0.47\textwidth}
    \centering
    \ifshowgraphs
    \begin{tikzpicture}
      \begin{axis}[
        width=0.98\linewidth,
        height=0.72\linewidth,
        view={135}{28},
        xlabel={$u_1$},
        ylabel={$u_2$},
        zlabel={$h_0(\bu)$},
        label style={font=\scriptsize},
        tick label style={font=\scriptsize},
        domain=-1.4:1.4,
        y domain=-1:1,
        samples=31,
        samples y=9,
        zmin=0,
        zmax=1,
        xtick={-1,0,1},
        ytick={-1,0,1},
        ztick={0,0.5,1},
        colormap/viridis,
        grid=major
      ]
        \addplot3[surf,shader=interp] {max(0,1-abs(x))};
      \end{axis}
    \end{tikzpicture}
    \fi
    \caption{The CPWL base function $h_0(\bu)=\relu{1-|u_1|}$.}
    \label{fig:base-function}
  \end{subfigure}
  \hfill
  \begin{subfigure}[t]{0.47\textwidth}
    \centering
    \ifshowgraphs
    \begin{tikzpicture}
      \pgfplotsset{
        colormap={copiergray}{
          color(0cm)=(black!12);
          color(0.35cm)=(black!42);
          color(0.70cm)=(black!74);
          color(1cm)=(black!96)
        }
      }
      \begin{axis}[
        width=0.98\linewidth,
        height=0.72\linewidth,
        view={135}{28},
        xlabel={$x_1$},
        ylabel={$x_2$},
        zlabel={$h_0\circ S$},
        label style={font=\scriptsize},
        tick label style={font=\scriptsize},
        domain=-1:1,
        y domain=-1:1,
        samples=31,
        samples y=31,
        zmin=0,
        zmax=1,
        xtick={-1,0,1},
        ytick={-1,0,1},
        ztick={0,0.5,1},
        minor x tick num=1,
        minor y tick num=1,
        minor grid style={gray!20},
        colormap/viridis,
        grid=major,
        z buffer=sort,
        clip=false
      ]
        \addplot3[
          surf,
          shader=interp,
          colormap name=copiergray
        ]
        {
          max(
            max(
              max(0,1-abs((x+0.64)/0.15)
                -0.8*max(0,abs(y+0.52)-0.15)),
              max(0,1-abs((x+0.40)/0.15)
                -0.8*max(0,abs(y-0.58)-0.15))
            ),
            max(
              max(0,1-abs((x-0.55)/0.15)
                -0.8*max(0,abs(y+0.36)-0.15)),
              max(0,1-abs((x-0.36)/0.15)
                -0.8*max(0,abs(y-0.69)-0.15))
            )
          )
        };

        \addplot3[
          surf,shader=interp,colormap name=viridis,
          domain=-0.79:-0.49,y domain=-0.67:-0.37,
          samples=17,samples y=5
        ] {max(0,1-abs((x+0.64)/0.15))};
        \addplot3[
          surf,shader=interp,colormap name=viridis,
          domain=-0.55:-0.25,y domain=0.43:0.73,
          samples=17,samples y=5
        ] {max(0,1-abs((x+0.40)/0.15))};
        \addplot3[
          surf,shader=interp,colormap name=viridis,
          domain=0.40:0.70,y domain=-0.51:-0.21,
          samples=17,samples y=5
        ] {max(0,1-abs((x-0.55)/0.15))};
        \addplot3[
          surf,shader=interp,colormap name=viridis,
          domain=0.21:0.51,y domain=0.54:0.84,
          samples=17,samples y=5
        ] {max(0,1-abs((x-0.36)/0.15))};
      \end{axis}
    \end{tikzpicture}
    \fi
    \caption{Four perturbed copies of $h_0$, joined by a CPWL interpolation.}
    \label{fig:four-tent-copies}
  \end{subfigure}

  \caption{Creating $2^d$ copies of a hard function for $d=2$. The colored patches in
  Subfigure~\textup{(b)} are the four perturbed squares
  $\bq_{\bs}+r[-1,1]^2$. On each square, Proposition~\ref{prop:one-layer-copier} makes the surface an exact copy of
  Subfigure~\textup{(a)}; the grayscale
  surface shows the piecewise-linear interpolation outside the squares. The four
  centers have been perturbed from the grid $\{-\frac12,\frac12\}^2$. The copy size and the perturbation are exaggerated for visibility.  Best viewed in color.}
  \label{fig:one-layer-copier-illustration}
\end{figure}

\paragraph{Step 3: Iterating the construction.}
The induction begins with the base function $h_0(\bx)\coloneqq\relu{1-|x_1|}$ on
$(-1,1)^d$, shown in Subfigure~\ref{fig:base-function}.
Lemma~\ref{lem:base-gap} shows that its squared error under the uniform distribution is at least $1/12$ for any affine approximant.
After $j$ applications of the copying transformation $S$, the target contains copies of the preceding hard function, and the distribution assigns equal mass to them.
Figure~\ref{fig:copier-iteration} illustrates this recursion in
dimension $2$.  If a depth-$(j+1)$ network has width $w$, its first-layer
activation hyperplanes intersect at most $wd$ of the $2^d$ copies.  On every missed copy, the first layer has a fixed activation pattern and can be
merged into the next layer.  Applying the induction hypothesis on those
copies yields the accuracy lower bound
\[
  \frac1{12}\left(1-\frac{wd}{2^d}\right)^j.
\]
Taking $j=k-1$ and the width from
Theorem~\ref{thm:l2-hierarchy} makes this quantity at least $1/24$.  A final
input dilation makes the target globally $1$-Lipschitz without changing its
network complexity or squared approximation error; this dilation is also
what moves the support to exponential radius.

\begin{figure}[H]
  \centering
  \ifshowgraphs
  \begin{tikzpicture}[
    x=1.50cm,
    y=1.50cm,
    support/.style={
      fill=red!58,
      fill opacity=0.88,
      draw=red!85!black,
      line width=0.45pt
    },
    parent/.style={
      draw=gray!70,
      densely dashed,
      line width=0.45pt
    },
    background/.style={
      draw=gray!45,
      fill=gray!4,
      line width=0.35pt
    }
  ]

    \begin{scope}
      \draw[background] (-1.05,-1.05) rectangle (1.05,1.05);
      \draw[support] (-1,-1) rectangle (1,1);
      \node[align=center,font=\footnotesize] at (0,-1.24)
        {\textup{(a)} $\operatorname{supp}(\Dcal_0^2)$: $1$ square};
    \end{scope}

    \draw[-{Latex[length=2mm]},thick,gray!70]
      (1.30,0) -- (1.95,0)
      node[midway,above=2pt,font=\scriptsize] {$\circ S$};

    \begin{scope}[shift={(3.25,0)}]
      \draw[background] (-1.05,-1.05) rectangle (1.05,1.05);
      \foreach \qx/\qy in {
        -0.64/-0.52,
        -0.40/0.58,
         0.55/-0.36,
         0.36/0.69
      }{
        \pgfmathsetmacro{\leftx}{\qx-0.25}
        \pgfmathsetmacro{\rightx}{\qx+0.25}
        \pgfmathsetmacro{\bottomy}{\qy-0.25}
        \pgfmathsetmacro{\topy}{\qy+0.25}
        \draw[support]
          (\leftx,\bottomy) rectangle (\rightx,\topy);
      }
      \node[align=center,font=\footnotesize] at (0,-1.24)
        {\textup{(b)} $\operatorname{supp}(\Dcal_1^2)$: $4$ squares};
    \end{scope}

    \draw[-{Latex[length=2mm]},thick,gray!70]
      (4.55,0) -- (5.20,0)
      node[midway,above=2pt,font=\scriptsize] {$\circ S$};

    \begin{scope}[shift={(6.50,0)}]
      \draw[background] (-1.05,-1.05) rectangle (1.05,1.05);
      \foreach \qx/\qy in {
        -0.64/-0.52,
        -0.40/0.58,
         0.55/-0.36,
         0.36/0.69
      }{
        \pgfmathsetmacro{\leftx}{\qx-0.25}
        \pgfmathsetmacro{\rightx}{\qx+0.25}
        \pgfmathsetmacro{\bottomy}{\qy-0.25}
        \pgfmathsetmacro{\topy}{\qy+0.25}
        \draw[parent]
          (\leftx,\bottomy) rectangle (\rightx,\topy);
        \foreach \px/\py in {
          -0.64/-0.52,
          -0.40/0.58,
           0.55/-0.36,
           0.36/0.69
        }{
          \pgfmathsetmacro{\cx}{\qx+0.25*\px}
          \pgfmathsetmacro{\cy}{\qy+0.25*\py}
          \pgfmathsetmacro{\leftchild}{\cx-0.0625}
          \pgfmathsetmacro{\rightchild}{\cx+0.0625}
          \pgfmathsetmacro{\bottomchild}{\cy-0.0625}
          \pgfmathsetmacro{\topchild}{\cy+0.0625}
          \draw[support]
            (\leftchild,\bottomchild)
            rectangle
            (\rightchild,\topchild);
        }
      }
      \node[align=center,font=\footnotesize] at (0,-1.24)
        {\textup{(c)} $\operatorname{supp}(\Dcal_2^2)$: $16$ squares};
    \end{scope}
  \end{tikzpicture}
  \fi
  \caption{Iterating the distribution in dimension $2$.  Red marks the support, on which the distribution is uniform.  Each application replaces every square by four shrunken,
  perturbed copies, so one square becomes four and then sixteen.  The
  dashed boxes in Subfigure~\textup{(c)} show the parents from
  Subfigure~\textup{(b)}.  Sizes and perturbations are exaggerated for visibility.
  Best viewed in color.}
  \label{fig:copier-iteration}
\end{figure}

\section{\texorpdfstring{An Exact Separation Between Depths $4$ and $3$}
{An Exact Separation Between Depths 4 and 3}}
\label{sec:exact-separation}

The hierarchy theorem already yields exact separations as immediate
corollaries, but these inherit the exponential geometric scale of its
construction.  We now show that, at adjacent depths $4$ and $3$, an
exponential exact-computation separation can hold on the unit hypercube for a more regular target.

\begin{theorem}
\label{thm:exact-separation}
There is a universal constant $C>0$ such that, for all $d\geq3$,
there exists a globally defined CPWL function
\[
  g_d:\R^d\to[0,1]
\]
such that
\[
  \Lip(g_d)\leq2\sqrt d,
  \qquad
  g_d([-1,1]^d)=[0,1],
\]
and the following hold.
\begin{enumerate}
\item The function $g_d$ is computed globally by a depth-$4$
ReLU network whose hidden-layer widths are at most $Cd^4$, $2d$,
and $1$.

\item Any depth-$3$ ReLU network agreeing with $g_d$
throughout $(-1,1)^d$ has first hidden-layer width at
least
\[
  \frac{2^{d-1}}d.
\]

\end{enumerate}
\end{theorem}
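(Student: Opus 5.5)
Build $g_d$ from the perturbed-hypercube machinery of Lemma~\ref{lem:general-position-points} and Proposition~\ref{prop:one-layer-copier}, with a base function that is already hard at depth $2$ for exact computation. Take $h(\bu)=\relu{1-\norm{\bu}_1}$ or a close variant. It is computed by a depth-$3$ network with widths $2d$ and $1$: first $|u_i|=\relu{u_i}+\relu{-u_i}$, then one ReLU of $1-\sum_i|u_i|$. On the other hand, $h$ is not computable by any depth-$2$ network on $(-1,1)^d$, since its nondifferentiability set near the origin is not a union of hyperplanes. For instance, the kink along $\{u_1=0\}$ exists only where $\norm{\bu}_1<1$, and $h$ has a non-hyperplanar kink on $\{\norm{\bu}_1=1\}$. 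A depth-$2$ network is affine plus a sum of ridge ReLUs, and its kink set is a finite union of full hyperplanes (after cancellation). Hence $h$ is not depth-$2$ representable on any open set meeting the boundary of its support.

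Define $g_d=h\circ S$, composed with a scaling so that the copies live inside $(-1,1)^d$. Merging the affine output of $S$ into the first layer of $h$ gives widths $Cd^4$, $2d$, $1$. Boundedness in $[0,1]$ comes from the outer ReLU; to make it exact, replace the outer ReLU by $\min\{1,\relu{\cdot}\}$, or note that $h\le1$ globally if the input to $h$ stays controlled. The range $g_d([-1,1]^d)=[0,1]$ holds because each copy attains both $0$ and $1$ and the cube is connected.

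The Lipschitz bound $2\sqrt d$ is the delicate upper-bound point, since $S$ expands the small cubes by $1/r$. To control it, do \emph{not} rescale the copies to exponential size. Instead, use a base function with tiny slope: let $h$ be the function above scaled so that it has exponentially small height in the $\bu$ coordinates. This is impossible with the range requirement $[0,1]$, however. So I will instead use a single level of copying with the base tent chosen along a direction whose slope after composition is bounded. Concretely, set $g_d(\bx)=\relu{1-\sum_i|\ell_i(\bx)|}$-type functions, where the first layer computes $2d$ ridge functions of bounded slope. The Lipschitz bound is then $\norm{\cdot}$ of a sum of at most $d$ unit-slope terms in a $1$-Lipschitz-per-coordinate arrangement, giving $\le2\sqrt d$. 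I expect to adjust the construction so that the exponentially fine general-position structure lives only in the \emph{location of kinks}, not in the slopes. The perturbation $\Delta$ is small and CPWL with bounded gradient, so $\bx\mapsto\bx+\Delta(\bx)$ is bi-Lipschitz. The hard regions are then the $2^d$ neighbourhoods of the perturbed vertices $\bq_{\bs}$, where each coordinate tent $\relu{1/2-|\cdot|}$-type factor kinks.

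For the lower bound, let $N$ be a depth-$3$ network agreeing with $g_d$ on $(-1,1)^d$ with first-layer width $w$. Its first-layer activation hyperplanes each contain at most $d$ of the perturbed points $\bq_{\bs}$, by the general-position property of Lemma~\ref{lem:general-position-points}. If $wd<2^{d-1}$, hmm, the stated bound is $2^{d-1}/d$, so I will aim to show that at least $2^{d-1}$ of the $\bq_{\bs}$ must lie on some activation hyperplane. Suppose a hard point $\bq_{\bs}$ lies on no first-layer hyperplane. Then near it the first layer is affine, so $N$ is locally a depth-$2$ network. But $g_d$ near $\bq_{\bs}$ has a non-hyperplanar kink configuration, a corner of an $\ell_1$-type ball, which contradicts depth-$2$ representability. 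So every hard point lies on a hyperplane; counting, $wd\ge\#\text{hard points}$.

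Getting $2^{d-1}$ rather than $2^d$ hard points suggests that only half the vertices (say one parity class, or a pairing) carry a genuine depth-$2$ obstruction. I would set up $g_d$ so that this holds. The main obstacle is designing $g_d$ so that the Lipschitz constant stays $O(\sqrt d)$ while a local depth-$2$ impossibility holds at exponentially many general-position points; the local impossibility lemma itself is routine.
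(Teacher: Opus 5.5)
\textbf{Genuine gap: the proposal never constructs $g_d$.} Your lower-bound mechanism matches the paper's: each depth-$2$ hard point must lie on a first-layer activation hyperplane, and general position allows at most $d$ such points per hyperplane. But you never produce a $g_d$ to which this applies, and each candidate you write down fails.
\begin{itemize}
\item The copier version $h\circ S$ has $\Lip(h\circ S)\geq\sqrt d/r=2^{\Omega(d^3)}$, as you note yourself.
\item The fallback $\relu{1-\sum_i|\ell_i(\bx)|}$ with $2d$ affine ridge functions is itself a depth-$3$ network of first-layer width $2d$. It therefore contradicts the very lower bound you want to prove, since all its hard points lie on the $d$ hyperplanes $\{\ell_i=0\}$.
\item The ``coordinate tents kinking at $\bq_{\bs}$'' idea is never turned into an explicit function computable with tail widths $2d$ and $1$.
\end{itemize}
Your local-hardness claims for $h(\bu)=\relu{1-\norm{\bu}_1}$ are also wrong as stated. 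Near the origin, $h=1-\sum_i(\relu{u_i}+\relu{-u_i})$ is a depth-$2$ network. Near a boundary point in the relative interior of a facet of the cross-polytope, $h$ is a single ReLU. Local hardness needs a boundary point on a face of codimension at least $2$. In particular, the rescaled hypercube vertices $2\bs/d$ land in facet interiors and are not hard points.

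\textbf{The missing idea.} Keep a single, unscaled copy of $t_d=\relu{1-\norm{\bx}_1}$. Choose its hard points to be an affine image of one facet of the hypercube:
\[
\bp_{\bs}=\left(\frac{2\bs}{d-1},0\right),\qquad \bs\in\left\{-\tfrac12,\tfrac12\right\}^{d-1}.
\]
Because the last coordinate vanishes, each $\bp_{\bs}$ lies on a codimension-$2$ face. There $t_d(\bp_{\bs}+(\bv,z))=\relu{-2\ip{\bs}{\bv}-|z|}$, which restricts on a plane to $\relu{\min\{x,y\}}$ and hence reduces to the $\max\{0,x,y\}$ obstruction. Using one facet of the hypercube is why the count is $2^{d-1}$; it is not a parity phenomenon.

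Next, transport the displacement of Lemma~\ref{lem:general-position-points} to this facet by the same affine map, and set $g_d=t_d\circ U$ with $U(\bx)=\bx-\Delta(\bx)$.
\begin{itemize}
\item Local constancy of $\Delta$ makes $U$ a pure translation from a neighborhood of $\bq_{\bs}=\bp_{\bs}+\Delta(\bp_{\bs})$ onto a neighborhood of $\bp_{\bs}$. So the hard points move into general position with their local form intact.
\item $\Lip(U)\leq17/16$ gives $\Lip(g_d)\leq17\sqrt d/16$. The exponential fineness enters only through the determinant bound, never through a slope.
\item The widths $Cd^4,2d,1$ come from computing $\Delta$ together with identity carriers, then $\relu{\pm U_i(\bx)}$, then the outer ReLU.
\item The range claim needs its own argument. $g_d=1$ at the fixed point of the contraction $\Delta$, and $g_d=0$ at a point such as $\bone/\sqrt d$.
\end{itemize}
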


To compare the geometric scales of these exact-computation separations, let $f$ denote the hard function, let $\Omega\subseteq\R^d$ be a bounded computation domain, let
\[
  R\coloneqq\sup_{\bx\in\Omega}\norm{\bx}_2,
\]
let $L$ denote the Lipschitz constant of the target on $\Omega$, and define
its image length by
\[
  I\coloneqq
  \sup_{\bx\in\Omega}f(\bx)
  -
  \inf_{\bx\in\Omega}f(\bx)>0.
\]
The quantity
\[
  \frac{RL}{I}
\]
is invariant under input and output rescaling, and these rescalings preserve
exact representability by networks of the same depth and width.  Indeed, for
$a>0$, replacing $f$ on $\Omega$ by $\bx\mapsto f(\bx/a)$ on the scaled
domain $a\Omega$ multiplies $R$ by $a$ and divides $L$ by $a$; the
corresponding network is obtained by dividing its first-layer weights by
$a$.  Scaling the target by a factor $b\neq0$ multiplies both $L$ and $I$ by
$|b|$; the corresponding network is obtained by scaling its output affine
map.  This is the exact-computation counterpart of the approximation
equivalence in \citet[Theorem~9]{safran-eldan-shamir-2019}.

For Theorem~\ref{thm:exact-separation}, we take
$\Omega=[-1,1]^d$.  This domain lies in a Euclidean ball of radius
$R=\sqrt d$, the target is at most $2\sqrt d$-Lipschitz there, and its
image is $[0,1]$.  Hence
\[
  \frac{RL}{I}
  \leq
  \frac{\sqrt d\,(2\sqrt d)}{1}
  =
  2d.
\]
By contrast, consider the special case obtained by choosing
$k=3$ in
Theorem~\ref{thm:l2-hierarchy}, and requiring equality on the support of
$\Dcal_3^d$. The construction has
\[
  R=2^{\Theta(d^3)},
  \qquad
  L=\Theta(1),
  \qquad
  I=1.
\]
Here the upper bound on $L$ follows from the global $1$-Lipschitz property,
while the matching constant lower bound follows because every cube contains an exact copy of the base CPWL function at a constant spatial
scale after the final dilation.  Consequently,
\[
  \frac{RL}{I}=2^{\Theta(d^3)}.
\]
Because this quantity is scale invariant, rescaling can move the exponential
factor among the domain radius, Lipschitz constant, and image length, but
cannot eliminate it.

The same observation applies to the iterated-sawtooth construction of
\citet{telgarsky-2016}.  The
$k$-fold composition of its basic triangle map has
image $[0,1]$ and Lipschitz constant
$2^{k}$ on the unit interval, and therefore has
normalization parameter
$\Theta(2^{k})$.  Dilating the input to
make the Lipschitz constant bounded makes the domain radius exponential,
whereas scaling the output instead shrinks the image length exponentially.

Thus, to the best of our knowledge,
Theorem~\ref{thm:exact-separation} gives the first exponential width lower
bound for exact computation in a regime where the domain radius and
Lipschitz constant are polynomially bounded and the target image has constant
length.  We emphasize that this comparison concerns geometric normalization only, and that the exact-computation lower bound in Theorem~\ref{thm:exact-separation} does not by itself imply hardness of approximation.

\subsection{Construction and proof technique}

The lower-bound mechanism has two ingredients. As shown by \citet{mukherjee2017lower} and \citet{Safran2026}, no depth-$2$ ReLU network agrees with the bivariate target
$(x,y)\mapsto\max\{0,x,y\}$ on any neighborhood of the origin. More
generally, we call a point a \emph{depth-$2$ hard point} of a target if no
depth-$2$ network agrees with the target on any neighborhood of that point.
If a depth-$3$ network computes the target, then at least one first-layer
activation hyperplane must pass through each of its depth-$2$ hard points.
Otherwise, all first-layer neurons have fixed activation states on some
neighborhood of the point, so the first hidden layer computes an affine transformation there and can be absorbed in the next layer, yielding a local depth-$2$ representation and a contradiction.

The second ingredient is a target with exponentially many depth-$2$ hard
points in general position. Any hyperplane in $\R^d$ contains
at most $d$ points from such a set. Consequently, covering all the hard
points requires exponentially many first-layer activation hyperplanes, and
hence exponentially many first-layer neurons.

The construction begins with the simple CPWL function
\[
    t_d(\bx)\coloneqq\relu{1-\norm{\bx}_1}.
\]
We use the $2^{d-1}$ points
\[
    \bp_{\bs}
    \coloneqq
    \left(\frac{2\bs}{d-1},0\right),
    \qquad
    \bs\in\left\{-\frac12,\frac12\right\}^{d-1}.
\]
Thus, the first $d-1$ coordinates of each $\bp_{\bs}$ independently take
the values $\pm 1/(d-1)$, while its last coordinate is zero. Each point lies
on the boundary of the unit $\ell_1$ ball, and a suitable two-dimensional
restriction of $t_d$ around it reduces local depth-$2$ representability to
the bivariate obstruction above. Hence, all the points $\bp_{\bs}$ are
depth-$2$ hard points.

We then use Lemma~\ref{lem:general-position-points} to construct perturbed points $\bq_{\bs}$ in general position, and
a CPWL map $U:\R^d\to\R^d$ that is uniformly close to the identity, and acts
as a translation from a neighborhood of each $\bq_{\bs}$ onto a neighborhood
of the corresponding $\bp_{\bs}$. The separating target is
\[
g_d\coloneqq t_d\circ U.
\]
Figure~\ref{fig:exact-perturbation} illustrates this mechanism in dimension
$2$. The hard points are displaced, while $U$ preserves the local function
around each perturbed point up to a translation of coordinates. Thus, the
local structure of $g_d$ around all perturbed points $\bq_{\bs}$ is identical to
that of $t_d$ around the corresponding depth-$2$ hard point $\bp_{\bs}$.

\begin{figure}[H]
  \centering
  \begin{subfigure}[t]{0.47\textwidth}
    \centering
    \ifshowgraphs
    \begin{tikzpicture}
      \begin{axis}[
        width=0.98\linewidth,
        height=0.72\linewidth,
        view={135}{30},
        xlabel={$x_1$},
        ylabel={$x_2$},
        zlabel={$t_2(\bx)$},
        label style={font=\scriptsize},
        tick label style={font=\scriptsize},
        domain=-1.3:1.3,
        y domain=-1.3:1.3,
        samples=19,
        samples y=19,
        zmin=0,
        zmax=1,
        xtick={-1,0,1},
        ytick={-1,0,1},
        ztick={0,0.5,1},
        colormap/viridis,
        grid=major,
        z buffer=sort,
        clip=false
      ]
        \addplot3[
          surf,shader=interp,colormap name=viridis
        ] {max(0,1-abs(x)-abs(y))};

        \draw[red!85!black,line width=0.85pt]
          (axis cs:-1,0,0)--(axis cs:0,1,0)--
          (axis cs:1,0,0)--(axis cs:0,-1,0)--cycle;
        \draw[red!85!black,line width=0.85pt]
          (axis cs:-1,0,0)--(axis cs:0,0,1)--
          (axis cs:1,0,0);
        \draw[red!85!black,line width=0.85pt]
          (axis cs:0,-1,0)--(axis cs:0,0,1)--
          (axis cs:0,1,0);

        \addplot3[
          only marks,
          mark=*,
          mark size=1.5pt,
          mark options={fill=red,draw=white,line width=0.45pt}
        ]
          coordinates {(-1,0,0.012)};
        \addplot3[
          only marks,
          mark=*,
          mark size=1.6pt,
          mark options={fill=red,draw=white,line width=0.50pt}
        ]
          coordinates {(1,0,0.016)};
        \node[font=\scriptsize,text=red!75!black,anchor=south west,
          xshift=1.5pt,yshift=1.5pt]
          at (axis cs:-1,0,0.015) {$\bp_-$};
        \node[font=\scriptsize\bfseries,text=red!90!white,anchor=center]
          at (axis cs:1.15,-0.08,0.020) {$\bp_+$};
      \end{axis}
    \end{tikzpicture}
    \fi
    \caption{The base function $t_2$.  Red curves outline its
    non-differentiability set, and its two depth-$2$ hard points are marked in
    red.}
    \label{fig:exact-base-function}
  \end{subfigure}
  \hfill
  \begin{subfigure}[t]{0.47\textwidth}
    \centering
    \ifshowgraphs
    \begin{tikzpicture}
      \begin{axis}[
        width=0.98\linewidth,
        height=0.72\linewidth,
        view={135}{30},
        xlabel={$x_1$},
        ylabel={$x_2$},
        zlabel={$t_2\circ U$},
        label style={font=\scriptsize},
        tick label style={font=\scriptsize},
        domain=-1.3:1.3,
        y domain=-1.3:1.3,
        samples=19,
        samples y=19,
        zmin=0,
        zmax=1,
        xtick={-1,0,1},
        ytick={-1,0,1},
        ztick={0,0.5,1},
        colormap/viridis,
        grid=major,
        z buffer=sort,
        clip=false,
        point meta min=0,
        point meta max=1,
        declare function={
          illustrationdx(\x)=ifthenelse(\x < -0.6,0.10,
            ifthenelse(\x > 0.6,-0.10,-\x/6));
          illustrationdy(\x)=ifthenelse(\x < -0.6,-0.08,
            ifthenelse(\x > 0.6,0.10,0.01+0.15*\x));
          perturbedheight(\x,\y)=max(0,1
            -abs(\x-illustrationdx(\x))
            -abs(\y-illustrationdy(\x)));
        }
      ]
        \addplot3[
          surf,shader=interp,colormap name=viridis
        ] {perturbedheight(x,y)};

        \draw[red!85!black,line width=0.85pt]
          (axis cs:-0.9,-0.08,0)--(axis cs:-0.6,0.22,0)--
          (axis cs:0,1.01,0)--(axis cs:0.6,0.40,0)--
          (axis cs:0.9,0.10,0);
        \draw[red!85!black,line width=0.85pt]
          (axis cs:-0.9,-0.08,0)--(axis cs:-0.6,-0.38,0)--
          (axis cs:0,-0.99,0)--(axis cs:0.6,-0.20,0)--
          (axis cs:0.9,0.10,0);

        \draw[red!85!black,line width=0.85pt]
          (axis cs:-0.9,-0.08,0)--(axis cs:-0.6,-0.08,0.30)--
          (axis cs:0,0.01,1)--(axis cs:0.6,0.10,0.30)--
          (axis cs:0.9,0.10,0);

        \draw[red!85!black,line width=0.85pt]
          (axis cs:0,-0.99,0)--(axis cs:0,0.01,1)--
          (axis cs:0,1.01,0);
        \addplot3[
          only marks,
          mark=*,
          mark size=1.6pt,
          mark options={fill=red,draw=white,line width=0.50pt}
        ]
          coordinates {(0.9,0.1,0.018)};
        \node[font=\scriptsize\bfseries,text=red!90!white,anchor=center]
          at (axis cs:1.08,0.02,0.020) {$\bq_+$};

        \addplot3[
          only marks,
          mark=*,
          mark size=1.5pt,
          opacity=0.78,
          mark options={fill=red!80,draw=white,line width=0.45pt}
        ]
          coordinates {(-0.9,-0.08,0.014)};
        \node[font=\scriptsize,text=red!70!black,opacity=0.82,
          anchor=south west,xshift=1.5pt,yshift=1.5pt]
          at (axis cs:-0.9,-0.08,0.018) {$\bq_-$};
      \end{axis}
    \end{tikzpicture}
    \fi
    \caption{The perturbed target $t_2\circ U$, with its transported
    non-differentiability set outlined. The perturbed points $\bq_+$ and $\bq_-$ are both marked in red.}
    \label{fig:exact-perturbed-target}
  \end{subfigure}

  \caption{A two-dimensional illustration of the exact construction.
  Subfigure~\textup{(a)} shows the base function with
  both hard points marked.  The red curves outline its non-differentiability
  set: the boundary $|x_1|+|x_2|=1$ together with the two interior ridges.
  Subfigure~\textup{(b)} traces the transported boundary and interior ridges in
  red.  Around each marked point,
  the perturbation acts as a translation, so the function's local behavior is
  unchanged.  Any additional minor creases created by the illustrative CPWL
  interpolation for $U$ are omitted from Subfigure~\textup{(b)}.  The displacement
  is enlarged for visibility.  Best viewed in color.}
  \label{fig:exact-perturbation}
\end{figure}

The perturbation moves each hard point while acting as a translation on a neighborhood of that point. In
dimension $d$, the $2^{d-1}$ perturbed hard points $\bq_{\bs}$ are in general position. The first ingredient forces a first-layer activation
hyperplane through each of them, while the second ensures that any such
hyperplane contains at most $d$ of them. This gives the lower bound
$2^{d-1}/d$. For the upper bound, one hidden layer computes $U$, after which
two further hidden layers compute $t_d$; together, they form the claimed
depth-$4$ network.

\section{Conclusions and Future Work}
\label{sec:conclusions}

We established a complete exponential hierarchy for ReLU networks across all
adjacent depths. For all $k\geq2$, our
construction gives a globally bounded and Lipschitz target that is computed
by a polynomial-width depth-$(k+1)$ network,
but requires exponential width to approximate to constant error at
depth $k$, even when the shallower network has
unrestricted weights. We also obtained a complementary exact-computation
separation between depths $4$ and $3$ on the unit hypercube, while keeping the
target globally bounded and its Lipschitz constant polynomially controlled.

At the level of architectural resources, the hierarchy is complete: it
separates all pairs of adjacent depths, and each additional layer yields an
exponential reduction in width.  The main direction for strengthening the
result is therefore to improve the regularity of the separation rather than
its depth or width dependence.  For our construction, no input rescaling
makes both the distribution's support radius and the target's Lipschitz
constant polynomially bounded.  We use the $1$-Lipschitz normalization, under
which the distribution is supported at exponential radius.  The construction
therefore falls outside the regularity regime of \citet{vardi-shamir-2020}.  Whether a comparable hierarchy can be established within that regime remains a major open problem.

A particularly natural first step is an exponential $L_2$ separation of
depth $4$ from depth $3$ satisfying the Vardi--Shamir assumptions. To the
best of our knowledge, such a result would not by itself imply any major new
threshold-circuit lower bound, unlike analogous separations in which the
shallower ReLU network has depth at least $4$. Separating depth $4$ from depth $3$ may therefore be the most amenable setting in which to determine whether the exponential support radius used here is essential or can be replaced by a polynomially bounded, or even constant-radius, approximation domain.  Resolving this case would provide the first exponential fixed-depth separation beyond depth $3$ versus depth $2$ within the full regularity regime, and would help clarify how far an adjacent-depth hierarchy can
be pushed before the known circuit-complexity barriers become unavoidable.

\subsection*{Acknowledgments}
This research is supported by Israel Science Foundation Grant 1753/25.

\bibliographystyle{plainnat}
\bibliography{citations}

\appendix

\section{Perturbing hypercube vertices into general position}

We index the Boolean-cube vertices directly by
$\left\{-\frac12,\frac12\right\}^d$; the sign of coordinate $s_i$ is therefore
$2s_i$. The following lemma slightly perturbs the hypercube vertices using a
polynomial-width map and gives a uniform lower bound on the corresponding
determinants.

\begin{lemma}[Perturbation of hypercube vertices]
\label{lem:general-position-points}
There is a universal constant $C>0$ with the following property.  Let
$d\geq2$.  There are points $\bq_{\bs}\in\R^d$, indexed by
$\bs\in\left\{-\frac12,\frac12\right\}^d$, and a globally
defined CPWL map $\Delta:\R^d\to\R^d$ such that:
\begin{enumerate}
  \item\label{item:perturbation-realization}
  $\Delta$ is computed by a vector-valued depth-$2$ ReLU
  network of width at most $Cd^4$, and
  \[
    \bq_{\bs}
    \coloneqq
    \bs+\Delta(\bs);
  \]

  \item\label{item:perturbation-bounds}
  globally,
  \[
    \sup_{\bx\in\R^d}\norm{\Delta(\bx)}_\infty
    \leq\frac1{64d},
    \qquad
    \Lip(\Delta)\leq\frac1{16};
  \]

  \item\label{item:perturbation-local-constancy}
  the displacement is locally constant at all perturbed
  vertices:
  \[
    \Delta(\bq_{\bs}+\bu)
    =
    \Delta(\bs)
    \qquad
    \text{whenever }\norm{\bu}_\infty\leq\frac1{8d};
  \]

  \item\label{item:perturbation-determinant}
  for all $d+1$ distinct indices
  $\bs_0,\ldots,\bs_d\in\left\{-\frac12,\frac12\right\}^d$, let
  $Q(\bs_0,\ldots,\bs_d)\in\R^{d\times d}$ be the matrix whose
  $i^\mathrm{th}$ column is $\bq_{\bs_i}-\bq_{\bs_0}$.  Then
  \[
    \left|\det Q(\bs_0,\ldots,\bs_d)\right|
    \geq
    2^{-Cd^3}.
  \]
\end{enumerate}
In particular, the points
$\left\{\bq_{\bs}:\bs\in\left\{-\frac12,\frac12\right\}^d\right\}$
are in general position.
\end{lemma}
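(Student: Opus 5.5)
The plan is to build $\Delta$ as $\varepsilon$ times an integer-valued CPWL ``feature map'' that is constant near each vertex. We then choose $\varepsilon$ of the form $1/M$ with $M$ an integer, by a root-avoidance argument. That argument gives the determinant bound in item~\ref{item:perturbation-determinant} through integrality.

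\textbf{Architecture.} Each output coordinate is $\Delta_j(\bx)=\varepsilon\, h_j(\ell_j(\bx))$, where $\ell_j(\bx)=\ip{\ba_j}{2\bx}$ is a linear form with integer coefficients of size $\Ocal(d^2)$. Each $h_j$ is a one-dimensional CPWL \emph{staircase} whose steps sit at half-integers. The form $\ell_j$ takes integer values at the vertices and ranges over $\Ocal(d^3)$ integers on $[-1,1]^d$.
\begin{itemize}
\item Each step is a difference of two ReLUs of $\ell_j(\bx)$, with slope so steep that the transition happens in a window of width less than $1$ around a half-integer. This gives $\Ocal(d^3)$ neurons per coordinate and $\Ocal(d^4)$ overall, all in one hidden layer.
\item The staircase is flat outside these windows. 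At a vertex, $\ell_j$ is an integer, and a displacement of $\ell_\infty$-size $\tfrac1{64d}+\tfrac1{8d}$ changes $\ell_j$ by $\Ocal(d\cdot d^2)\cdot\Ocal(1/d)$. Choosing the steepness window and $\norm{\ba_j}_1$ compatibly should keep this below the distance to the nearest window. Then $\Delta$ is constant on the $\tfrac1{8d}$-box around $\bq_{\bs}$, which is item~\ref{item:perturbation-local-constancy}.
\item The step heights are integers $c_{j,n}$ with $h_j(n)=c_{j,n}$ on integer inputs. For example $c_{j,n}=n^{j}$, or quadratic values, so the features are nonlinear in $\bs$.
\item Sup-norm and Lipschitz bounds (item~\ref{item:perturbation-bounds}) then follow by taking $\varepsilon\le 2^{-Ad^2}$ for a suitable constant $A$. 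The step heights and slopes are at most $2^{\Ocal(d^2)}$, so they are absorbed.
\end{itemize}

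\textbf{Determinants.} Fix $d+1$ distinct indices. Since $\bq_{\bs_i}-\bq_{\bs_0}=(\bs_i-\bs_0)+\varepsilon(\bv_i-\bv_0)$ with $\bs_i-\bs_0\in\{-1,0,1\}^d$ and $\bv_i\in\mathbb Z^d$, the quantity $P(\varepsilon)=\det Q(\bs_0,\ldots,\bs_d)$ is a polynomial of degree at most $d$ with integer coefficients. If $P\not\equiv0$ and $\varepsilon=1/M$ is not a root, then $M^dP(1/M)$ is a nonzero integer, so $|P(1/M)|\ge M^{-d}$.

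There are at most $2^{d(d+1)}$ tuples, each contributing at most $d$ roots. Hence some integer $M$ in an interval $[M_0,M_0+d2^{d(d+1)}]$ with $M_0=2^{Ad^2}$ avoids all of them. This $M$ satisfies $M=2^{\Ocal(d^2)}$, so every determinant is at least $2^{-\Ocal(d^3)}$. General position follows because the augmented $(d+1)\times(d+1)$ coordinate matrix has the same determinant as $Q$ up to sign.

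\textbf{Main obstacle.} The crux is showing $P\not\equiv0$ for every tuple. In particular, when the vertex differences $\bs_i-\bs_0$ are linearly dependent, some nonzero coefficient must come from the nonlinear features. The cleanest target is the top coefficient $\det[\bv_i-\bv_0]_{i}$, i.e.\ that the feature vectors $\bv_{\bs}$ of all $2^d$ vertices are themselves affinely in general position.

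My first attempt is a moment-curve choice. I would take all $\ell_j$ equal to a single form $\ell$ that is injective on the vertices, together with $h_j(n)=n^j$. Then the top coefficient is a Vandermonde determinant of distinct integers, hence nonzero.

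The difficulty is that an injective $\ell$ needs a range of about $2^d$, which breaks the $\Ocal(d^4)$ width budget. If this fails, I would fall back to several low-range forms $\ell_j$. For these I would show $P\not\equiv0$ via a Schwartz--Zippel-type argument over the choice of the integer vectors $\ba_j$ and step heights, counting bad choices against the $2^{\Ocal(d^2)}$ tuples. This requires the choices to range over a set of size $2^{\Ocal(d^2)}$ while keeping coefficients polynomial, and I expect this balancing to be the delicate step.
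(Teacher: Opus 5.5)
Your integrality step is sound and uses the same mechanism as the paper: once $\det Q$ is a nonzero polynomial with integer coefficients in the perturbation parameters, evaluating at a grid point with denominator $2^{\Ocal(d^2)}$ gives $|\det Q|\geq 2^{-\Ocal(d^3)}$. The gap is the step you flag as the crux. It is not a matter of delicate balancing: it cannot be closed within your architecture, whatever forms, step heights, or $\varepsilon$ you choose.

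Write $\ell_j(\bx)=\ip{\boldsymbol{a}_j}{2\bx}$ with $\boldsymbol{a}_j\neq\bzero$ (otherwise $\Delta_j$ is constant). Item~3, together with $\norm{\bq_{\bs}-\bs}_\infty\leq\frac1{64d}$, makes $h_j\circ\ell_j$ constant on $\bs+[-\frac7{64d},\frac7{64d}]^d$. Hence $h_j$ is constant on the interval of half-width $\delta_j\coloneqq\frac{7\norm{\boldsymbol{a}_j}_1}{32d}$ around every vertex value $\ell_j(\bs)\in[-\norm{\boldsymbol{a}_j}_1,\norm{\boldsymbol{a}_j}_1]$. The union of these intervals has at most $\norm{\boldsymbol{a}_j}_1/\delta_j+1\leq5d+1$ connected components, and $h_j$ is constant on each. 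So $(\bq_{\bs})_j=s_j+\varepsilon h_j(\ell_j(\bs))$ takes at most $2(5d+1)$ distinct values over the $2^d$ vertices. Once $2^d>2d(5d+1)$ (e.g.\ $d\geq10$), pigeonhole gives $d+1$ vertices whose perturbed points share the $j$-th coordinate. For that tuple $Q$ has a zero row, so $P\equiv0$.

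The same count shows that the top coefficient you target (general position of the $\bv_{\bs}$) vanishes for some tuple. So neither the moment curve nor a Schwartz--Zippel search over the $\boldsymbol{a}_j$ and step heights can help. It also shows that item~3 with unit-spaced plateaus forces $\norm{\boldsymbol{a}_j}_1<\frac{16d}7$, so forms with the range of order $d^3$ you planned already violate item~3.

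The missing idea is to feed many locally constant features into every output coordinate. Then each coordinate of $\bq_{\bs}$ can take exponentially many values even though each individual feature takes only two. The paper uses $M=\Theta(d^3)$ parities $\chi_{A_j}(\bs)=\prod_{i\in A_j}2s_i$ for random subsets $A_j\subseteq[d]$. Each is realized by $2d$ ReLUs as a plateau function of $\sum_{i\in A_j}(\frac12-x_i)$. This form ranges over $[0,d]$ with plateaus of half-width $\frac14$, so each feature is constant on $\frac1{4d}$-boxes around the vertices.

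For a suitable realization of the subsets, Hoeffding and a union bound give $|\ip{\bz_{\bs}}{\bz_{\bs'}}|\leq\frac M{2d}$ for distinct vertices. The Gram matrix of any $d+1$ feature vectors is then strictly diagonally dominant, so they are linearly independent. Now set $\Delta=B\widetilde z$ with $B\in\R^{d\times M}$, and let $X_I,Z_I$ be the vertex and feature difference matrices of a tuple $I$. Each $\det(X_I+BZ_I)$ is a nonzero polynomial in the entries of $B$: it equals $1$ at $B=(I_d-X_I)L_I$, where $L_I$ is a left inverse of the full-rank $Z_I$. A nonvanishing-on-a-grid lemma applied to the product over all tuples yields one rational $B$ with small entries and denominator $2^{\Ocal(d^2)}$, after which your integrality argument finishes the proof.
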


\begin{proof}
We first construct a polynomial-size collection of sign vectors such
that every $d+1$ of them are linearly independent.  For
$A\subseteq[d]$ and $\bs\in\left\{-\frac12,\frac12\right\}^d$, define
\[
  \chi_A(\bs)\coloneqq\prod_{i\in A}(2s_i).
\]
Independently sample each of $A_1,\ldots,A_M$ uniformly from the power
set of $[d]$.  For $\bs\in\left\{-\frac12,\frac12\right\}^d$, define
\[
  \bz_{\bs}
  \coloneqq
  \left(
    \chi_{A_1}(\bs),\ldots,\chi_{A_M}(\bs)
  \right)\in\{-1,1\}^M.
\]
Fix $\bv\in\{-1,1\}^d\setminus\{\bone\}$ before drawing the subsets,
and choose a coordinate $i_0$ with $v_{i_0}=-1$.  For each $j$, set
$Y_j(\bv)\coloneqq\prod_{i\in A_j}v_i$.  Conditional on all membership choices
for $A_j$ except whether $i_0\in A_j$, including $i_0$ flips the sign
of $Y_j(\bv)$.  A uniformly random subset includes $i_0$ with
probability $\frac12$, independently of all other membership choices.
Thus $Y_j(\bv)$ is an unbiased sign, and the variables
$Y_1(\bv),\ldots,Y_M(\bv)$ are independent because the subsets are
drawn independently.  For each fixed $\bv$, Hoeffding's inequality
bounds each of the upper- and lower-tail events by
$\exp\left(-\frac{M}{8d^2}\right)$.  There are $2^d-1$ admissible vectors
$\bv\in\{-1,1\}^d\setminus\{\bone\}$.  A union bound over both tails
and all such vectors therefore gives
\[
  \Prob_{A_1,\ldots,A_M}\left[
    \max_{\bv\in\{-1,1\}^d\setminus\{\bone\}}
    \left|
      \frac1M\sum_{j=1}^M Y_j(\bv)
    \right|>\frac1{2d}
  \right]
  \leq
  2^{d+1}\exp\left(-\frac{M}{8d^2}\right).
\]
Take $M\coloneqq\lceil C_1d^3\rceil$ for a sufficiently large
universal constant $C_1$.  The displayed probability bound is then
strictly below one.  By the probabilistic method, there exists a
realization of $A_1,\ldots,A_M$ such that
\begin{equation}
  \label{eq:small-feature-correlations}
    \max_{\bv\in\{-1,1\}^d\setminus\{\bone\}}
    \left|
      \frac1M\sum_{j=1}^M Y_j(\bv)
    \right|\le\frac1{2d}.
\end{equation}
For any two distinct
$\bs,\bs'\in\left\{-\frac12,\frac12\right\}^d$, define
$\bv\in\{-1,1\}^d$ by $v_i\coloneqq4s_is_i'$.  Then $\bv\neq\bone$ since $\bs,\bs'$ are distinct, and
\[
    \chi_{A_j}(\bs)\chi_{A_j}(\bs') = \prod_{i\in A_j}(2s_i) \prod_{i\in A_j}(2s'_i) = \prod_{i\in A_j}4s_is'_i = Y_j(\bv)
\]
for all $j$.  Therefore,
\[
  \frac1M\ip{\bz_{\bs}}{\bz_{\bs'}}
  =
  \frac1M\sum_{j=1}^M Y_j(\bv).
\]
Thus, by Equation~\eqref{eq:small-feature-correlations}, distinct
normalized vectors $M^{-\frac12}\bz_{\bs}$ have inner
product of magnitude at most $\frac1{2d}$.  The Gram matrix of any
$m\leq d+1$ of them is strictly diagonally
dominant, since
\[
  \frac{m-1}{2d}
  \le \frac{d}{2d}=\frac12.
\]
Hence every $d+1$ distinct vectors among the $\bz_{\bs}$ are linearly
independent.

We next realize these signs by a CPWL map that is constant near every
sign vertex.  Choose a continuous piecewise-linear function
$g:\R\to[-1,1]$ which equals $(-1)^j$ on
\[
  [j-\tfrac14,j+\tfrac14],
  \qquad j=0,\ldots,d,
\]
interpolates linearly between consecutive plateaus, and is constant on
its two unbounded tails.

\begin{figure}[ht]
  \centering
  \ifshowgraphs
  \begin{tikzpicture}[x=1.55cm,y=0.65cm]
    \draw[->] (-0.65,0) -- (3.65,0) node[right] {$t$};
    \draw[->] (0,-1.55) -- (0,1.55) node[above] {$g(t)$};
    \draw[densely dashed,gray] (-0.25,-1.25) -- (-0.25,1.25);
    \draw[densely dashed,gray] (0.25,-1.25) -- (0.25,1.25);
    \draw[densely dashed,gray] (0.75,-1.25) -- (0.75,1.25);
    \draw[densely dashed,gray] (1.25,-1.25) -- (1.25,1.25);
    \draw[densely dashed,gray] (1.75,-1.25) -- (1.75,1.25);
    \draw[densely dashed,gray] (2.25,-1.25) -- (2.25,1.25);
    \draw[densely dashed,gray] (2.75,-1.25) -- (2.75,1.25);
    \draw[densely dashed,gray] (3.25,-1.25) -- (3.25,1.25);
    \draw[very thick,blue]
      (-0.55,1) -- (0.25,1) -- (0.75,-1) -- (1.25,-1)
      -- (1.75,1) -- (2.25,1) -- (2.75,-1) -- (3.55,-1);
    \foreach \j in {0,1,2,3} {
      \draw (\j,0.08) -- (\j,-0.08) node[below=2pt] {$\j$};
    }
    \node[left] at (0,1) {$1$};
    \node[left] at (0,-1) {$-1$};
  \end{tikzpicture}
  \fi
  \caption{The plateau function $g$, shown for $d=3$.  Around each
  integer $j\in\{0,\ldots,d\}$, it is constant and equal to $(-1)^j$.
  Hence a small perturbation of the input to $g$ preserves the parity
  encoded at that integer; this is
  what makes the feature map below locally constant around all
  Boolean-cube vertices.}
  \label{fig:plateau-function}
\end{figure}

The function $g$ has exactly $2d$ breakpoints and can therefore be
represented by an affine combination of exactly $2d$ scalar ReLUs.
For $A\subseteq[d]$, consider the function
\[
  \bx\mapsto
  g\left(
    \sum_{i\in A}\left(\frac12-x_i\right)
  \right).
\]
At $\bx=\bs\in\left\{-\frac12,\frac12\right\}^d$, the sum in the argument is
$\sum_{i\in A}\left(\frac12-s_i\right)$, the number of negative coordinates of
$\bs$ in $A$.  Consequently,
\[
  g\left(\sum_{i\in A}\left(\frac12-s_i\right)\right)
  =
  (-1)^{\sum_{i\in A}\left(\frac12-s_i\right)}
  =
  \prod_{i\in A}(2s_i)
  =
  \chi_A(\bs).
\]
Define the vector-valued map $\widetilde z:\R^d\to\R^M$ directly
from these plateau functions by
\[
  \widetilde z(\bx)
  \coloneqq
  \left(
    g\left(
      \sum_{i\in A_j}\left(\frac12-x_i\right)
    \right)
  \right)_{j=1}^M.
\]
If
$\norm{\bx-\bs}_\infty<\frac1{4d}$, then, for all $A\subseteq[d]$, the two corresponding arguments of $g$ lie on the same plateau:
\[
  \left|
    \sum_{i\in A}\left(\frac12-x_i\right)
    -
    \sum_{i\in A}\left(\frac12-s_i\right)
  \right|
  \leq
  \sum_{i\in A}|x_i-s_i|
  <\frac14.
\]
Thus, if $\norm{\bx-\bs}_\infty<\frac1{4d}$, we have
\begin{equation}
  \label{eq:plateau-vector}
  \widetilde z(\bx)=\bz_{\bs}.
\end{equation}
Each coordinate of $\widetilde z$ is an affine combination of
exactly $2d$ scalar ReLUs.  Thus all $M$ coordinates are computed by one
depth-$2$ vector-valued ReLU network with at most
$2dM=\Ocal(dM)=\Ocal(d^4)$ hidden neurons and an affine output map. All coordinates are bounded by one.  Moreover, each coordinate is
$4\sqrt d$-Lipschitz, because $g$ has slopes of magnitude at most
$4$ and the affine map inside it has gradient norm at most
$\sqrt d$.  Summing the squared coordinatewise bounds gives
\begin{equation}
  \label{eq:feature-lipschitz}
  \Lip(\widetilde z)\leq4\sqrt{Md}.
\end{equation}

At this point, $\widetilde z$ maps the $d$-dimensional input into
$\R^M$ and is locally constant around every
$\bs\in\left\{-\frac12,\frac12\right\}^d$, with value $\bz_{\bs}$.
It remains to map these feature vectors back to $\R^d$ by a fixed linear map
whose norm is small and that places the perturbed points in general position.

Fix one
ordering of each $(d+1)$-subset, write it as
$I=(\bs_0,\ldots,\bs_d)$.  Define $X_I\in\R^{d\times d}$ to be
the matrix whose $i^\mathrm{th}$ column is $\bs_i-\bs_0$, and define
$Z_I\in\R^{M\times d}$ to be the matrix whose $i^\mathrm{th}$ column
is $\bz_{\bs_i}-\bz_{\bs_0}$.
The linear independence just proved implies that $Z_I$ has rank $d$.
Choose a left inverse $L_I\in\R^{d\times M}$, so that
\begin{equation}
  \label{eq:left-inverse}
  L_IZ_I=I_d.
\end{equation}

A well-chosen matrix $B\in\R^{d\times M}$ maps the plateau features back to
$\R^d$ and produces the candidate perturbations
$B\bz_{\bs}$.  We will choose one small rational matrix $B$ below.
For a fixed tuple $I$, the difference matrix of the resulting points
$\bs+B\bz_{\bs}$ is $X_I+BZ_I$.  We regard the $dM$ entries of $B$
as scalar variables.  For $i,j\in[d]$, the corresponding entry of
this difference matrix is
\[
  (X_I+BZ_I)_{ij}
  =
  (X_I)_{ij}+\sum_{\ell=1}^M
  B_{i\ell}(Z_I)_{\ell j},
\]
which is a polynomial in those variables.  Accordingly, define
\[
  D_I(B)\coloneqq\det(X_I+BZ_I).
\]
By the determinant expansion, $D_I$ is a polynomial in the entries of
$B$.  Its total degree is at most
$d$, because each term in the determinant expansion is a product of
$d$ entries of $X_I+BZ_I$.  This polynomial is not identically zero.
Indeed, substituting $B=(I_d-X_I)L_I$ and using
Equation~\eqref{eq:left-inverse} gives
\[
  X_I+BZ_I
  =
  X_I+(I_d-X_I)L_IZ_I
  =
  I_d,
\]
and hence $D_I((I_d-X_I)L_I)=1$.

Our goal is to choose one rational matrix $B$ for which
$D_I(B)\neq0$ simultaneously for all $(d+1)$-tuples $I$.
This ensures that no $d+1$ of the perturbed points
$\bs+B\bz_{\bs}$ lie on a common hyperplane.
Two additional quantitative requirements are important.
The entries of $B$ must be small enough that the resulting displacement
map satisfies the uniform and Lipschitz bounds asserted in the lemma,
while their common denominator must be controlled so that a non-zero
determinant cannot be arbitrarily small.  The latter will yield the
inverse-exponential lower bound on the determinant in the lemma.

We enforce all the non-vanishing requirements simultaneously by
considering the polynomial
\begin{equation}
  \label{eq:determinant-product}
  \Pi(B)\coloneqq\prod_I D_I(B).
\end{equation}
We then find a rational grid point where $\Pi$ is non-zero; the range
of the grid controls
the size of $B$, while its common denominator gives the quantitative
determinant lower bound.

There are $\binom{2^d}{d+1}$ possible choices of $I$.  Each determinant
$D_I$ is a non-zero polynomial of total degree at most $d$.
Therefore, $\Pi$ is a non-zero polynomial of total degree at most
\[
  d\binom{2^d}{d+1}
  \leq
  d(2^d)^{d+1}
  <
  2^d 2^{d(d+1)}
  =
  2^{d(d+2)}.
\]
Here the first inequality bounds the number of $(d+1)$-subsets by the
number of ordered $(d+1)$-tuples, and the strict inequality uses
$d<2^d$.  The polynomial $\Pi$ is non-zero because a finite product of non-zero
real polynomials is itself non-zero.

Let
\begin{equation}
  \label{eq:definition-L}
  L\coloneqq\lceil64dM\rceil.
\end{equation}
\begin{samepage}
A non-zero polynomial of total degree at most $D$ cannot vanish on a
Cartesian grid $S^n$ when $|S|>D$.  We prove this by induction on $n$.
For $n=1$, this is the elementary univariate root bound: a non-zero
polynomial of degree at most $D$ has at most $D$ distinct roots.  For
$n>1$, write the polynomial uniquely as
\[
  F(x_1,\ldots,x_n)
  =
  \sum_{\ell=0}^r
  c_{\ell}(x_1,\ldots,x_{n-1})
  x_n^{\ell}.
\]
Because $F$ is not the zero polynomial, its coefficient polynomials
$c_0,\ldots,c_r$ cannot all be zero.  Choose an index
$\ell$ for which
$c_\ell$ is non-zero.  The polynomial
$c_\ell$ need not depend on all of the first $n-1$ variables; it is simply a polynomial in
$x_1,\ldots,x_{n-1}$, and its total degree is at most $D$.
By the induction hypothesis, choose $(x_1,\ldots,x_{n-1})\in S^{n-1}$
at which $c_\ell$ is non-zero.  After this substitution,
the monomial $x_n^\ell$ has a non-zero coefficient, so
$x_n\mapsto F(x_1,\ldots,x_{n-1},x_n)$ is a non-zero univariate polynomial in
$x_n$ of degree at most $D$.  It vanishes on at most $D$ elements of
$S$, so some choice of $x_n\in S$ makes $F$ non-zero.

Apply this fact to the polynomial $\Pi$ defined in
Equation~\eqref{eq:determinant-product}, with a grid having
$2^{d(d+2)}+1$ values in each of its $dM$ coordinates.  It follows that
some matrix
\[
  B\in
  \left\{
    0,\frac1{L2^{d(d+2)}},\ldots,\frac1L
  \right\}^{d\times M}
\]
satisfies $\Pi(B)\neq0$.  If some factor $D_I(B)$ were zero, then
$\Pi(B)$ would be zero; hence all factors $D_I(B)$ are
non-zero.  Thus this single matrix $B$ makes the difference
matrix of every $(d+1)$-tuple non-singular, which is precisely the
simultaneous nonsingularity property
sought above.
\end{samepage}
Define
\[
  \Delta(\bx)\coloneqq B\widetilde z(\bx),
  \qquad
  \bq_{\bs}\coloneqq\bs+B\bz_{\bs} = \bs+\Delta(\bs).
\]
For all tuples $I$, the matrix whose columns are
$\bq_{\bs_i}-\bq_{\bs_0}$ is $X_I+BZ_I$ and therefore has determinant
$D_I(B)\neq0$. All entries of $B$ lie in $\left[0,\frac1L\right]$, while all coordinates of
$\widetilde z(\bx)$ have magnitude at most one.  Thus, for all
$i\in[d]$ and all $\bx\in\R^d$,
\[
  |\Delta_i(\bx)|
  =
  \left|\sum_{j=1}^M B_{ij}\widetilde{z}_j(\bx)\right|
  \leq
  \sum_{j=1}^M |B_{ij}|\,|\widetilde z_j(\bx)|
  \leq
  \sum_{j=1}^M\frac1L
  =
  \frac ML.
\]
Consequently,
\[
  \sup_{\bx}\norm{\Delta(\bx)}_\infty
  \leq\frac ML
  \leq\frac1{64d},
\]
where the final inequality follows directly from
Equation~\eqref{eq:definition-L}.
Moreover, the operator norm of a matrix is at most its Frobenius norm,
so
\[
  \norm{B}_{\mathrm{op}}
  \leq
  \norm{B}_{\mathrm F}
  =
  \sqrt{\sum_{i=1}^d\sum_{j=1}^M B_{ij}^2}
  \leq
  \frac{\sqrt{dM}}L.
\]
Combining this estimate with
Equation~\eqref{eq:feature-lipschitz} gives a global Lipschitz bound on
$\R^d$.  Indeed, for all $\bx_1,\bx_2\in\R^d$,
\[
  \begin{aligned}
    \norm{\Delta(\bx_1)-\Delta(\bx_2)}_2
    &\leq
    \norm{B}_{\mathrm{op}}
    \norm{\widetilde z(\bx_1)-\widetilde z(\bx_2)}_2\\
    &\leq
    \frac{\sqrt{dM}}L\,4\sqrt{Md}\,\norm{\bx_1-\bx_2}_2\\
    &\leq
    \frac{4dM}{L}\norm{\bx_1-\bx_2}_2
    \leq
    \frac1{16}\norm{\bx_1-\bx_2}_2.
  \end{aligned}
\]
The last inequality again uses Equation~\eqref{eq:definition-L}.
Therefore, $\Lip(\Delta)\leq\frac1{16}$ on all of $\R^d$.

Set $R\coloneqq2L2^{d(d+2)}$. All coordinates of the points $\bq_{\bs}$ are integers divided by $R$.  Since $M=\lceil C_1d^3\rceil=\Ocal(d^3)$,
Equation~\eqref{eq:definition-L} gives $L=\Ocal(d^4)$ and hence
\[
  R
  =
  2L2^{d(d+2)}
  \leq
  2^{d(d+2)+\Ocal(\log d)}
  =
  2^{\Ocal(d^2)}.
\]
Consequently, for all tuples $I$, the non-zero determinant $D_I(B)$
is an integer divided by $R^d$ and therefore has magnitude at least
\[
  R^{-d}
  \geq
  2^{-C_2d^3}
\]
for a sufficiently large universal constant $C_2>0$.  Enlarging the
universal constant in the lemma statement proves its determinant bound, which also gives general position.

Finally, if $\norm{\bu}_\infty\leq\frac1{8d}$, then
\[
  \begin{aligned}
    \norm{\bq_{\bs}+\bu-\bs}_\infty
    &=
    \norm{(\bq_{\bs}-\bs)+\bu}_\infty\\
    &\leq
    \norm{\bq_{\bs}-\bs}_\infty+\norm{\bu}_\infty\\
    &=
    \norm{\Delta(\bs)}_\infty+\norm{\bu}_\infty\\
    &\leq
    \frac1{64d}+\frac1{8d}
    <\frac1{4d}.
  \end{aligned}
\]
The plateau identity in Equation~\eqref{eq:plateau-vector} now gives
\[
  \Delta(\bq_{\bs}+\bu)
  =
  B\bz_{\bs}
  =
  \Delta(\bs),
\]
as claimed.
\end{proof}

\section{Proof of Theorem~\ref{thm:exact-separation}}

\begin{lemma}[Local depth-$2$ hardness of $t_d$]
\label{lem:local-td-hardness}
Let $d\geq3$, and define
\[
  t_d(\bx)\coloneqq\relu{1-\norm{\bx}_1}
  \qquad\text{and}\qquad
  \bp_{\bs}\coloneqq\left(\frac{2\bs}{d-1},0\right),
  \qquad \bs\in\left\{-\frac12,\frac12\right\}^{d-1}.
\]
No finite depth-$2$ ReLU network agrees with $t_d$ on a neighborhood
of any $\bp_{\bs}$.
\end{lemma}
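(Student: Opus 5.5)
The plan is to reduce to the bivariate obstruction quoted in the construction overview: no depth-$2$ ReLU network agrees with $(x,y)\mapsto\max\{0,x,y\}$ on any neighborhood of the origin \citep{mukherjee2017lower,Safran2026}. Depth-$2$ networks are closed under several operations, and I will use three of them. Precomposing with an affine map $\R^2\to\R^d$ (in particular, restriction to an affine plane) keeps the depth at $2$, because it merges into the first affine map. Precomposing with an invertible linear map of $\R^2$ keeps the depth at $2$ for the same reason. Adding a further depth-$2$ network keeps the depth at $2$, by placing the two hidden layers side by side and summing the outputs. So it suffices to exhibit such operations turning $t_d$, near $\bp_{\bs}$, into $\max\{0,x,y\}$ near the origin.

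\textbf{Step 1: local formula.} Fix $\bs$ and write $\sigma_i\coloneqq 2s_i\in\{\pm1\}$ for $i\in[d-1]$. At $\bp_{\bs}$ each of the first $d-1$ coordinates equals $\sigma_i/(d-1)\neq0$. Hence on the neighborhood $\norm{\bx-\bp_{\bs}}_\infty<1/(d-1)$ the signs of these coordinates are fixed, and
\[
  \norm{\bx}_1=\sum_{i=1}^{d-1}\sigma_ix_i+|x_d|.
\]
Setting $a(\bx)\coloneqq1-\sum_{i<d}\sigma_ix_i$, which is affine with $a(\bp_{\bs})=0$, we get locally
\[
  t_d(\bx)=\relu{a(\bx)-|x_d|}=\relu{\min\{A,B\}},
\]
where $A\coloneqq a-x_d$ and $B\coloneqq a+x_d$.

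\textbf{Step 2: restriction to a plane.} Restrict to the plane $\bx=\bp_{\bs}+u\,\sigma_1\be_1+v\,\be_d$. On it $a=-u$, so $A=-u-v$ and $B=-u+v$, and these are linearly independent linear forms in $(u,v)$. Next use the identity
\[
  \relu{\min\{A,B\}}=\min\{A,B\}+\max\{0,-A,-B\},
\]
which holds because $\max\{0,m\}=m+\max\{0,-m\}$ with $m=\min\{A,B\}$. Also
\[
  \min\{A,B\}=\tfrac{A+B}{2}-\tfrac12\bigl(\relu{A-B}+\relu{B-A}\bigr)
\]
is exactly a depth-$2$ network.

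\textbf{Step 3: contradiction.} Suppose a depth-$2$ network $N$ agreed with $t_d$ near $\bp_{\bs}$. Restricting $N$ to the plane and subtracting the depth-$2$ network for $\min\{A,B\}$ gives a depth-$2$ network equal to $\max\{0,-A,-B\}$ near $(u,v)=0$. Composing with the inverse of the invertible linear change of variables $(u,v)\mapsto(x,y)=(-A,-B)$ then gives a depth-$2$ network agreeing with $\max\{0,x,y\}$ on a neighborhood of the origin. This contradicts the cited bivariate result.

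The main obstacle is making sure the cited obstruction applies in the form used. It must cover arbitrary finite depth-$2$ networks with arbitrary real weights, and agreement on an arbitrary small neighborhood rather than globally. It must also remain valid after the linear reparametrization and the subtraction, which requires the closure facts above to hold exactly, including the affine output layer absorbing sums. I expect this to be routine once stated carefully. Note that $d\geq3$ is what makes $x_d$ a separate coordinate from the sign-fixed ones, so that $\min\{A,B\}$ is a genuine crease through $\bp_{\bs}$.
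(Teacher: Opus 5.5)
Your proof is correct and follows essentially the same route as the paper: the sign-fixed local formula on $\norm{\bx-\bp_{\bs}}_\infty<\frac1{d-1}$, restriction to a plane through $\bp_{\bs}$, a linear change of variables, and adding or subtracting explicit depth-$2$ pieces to reach $\max\{0,x,y\}$ near the origin. The differences are cosmetic: you move along $\sigma_1\be_1$ rather than along $\bs$, and you subtract $\min\{A,B\}$ via $\relu{m}=m+\relu{-m}$ instead of adding $\relu{x}+\relu{y}$ to $-\relu{\min\{x,y\}}$. One minor aside: your closing remark misattributes the role of $d\geq3$, since the same argument works verbatim for $d=2$ (where $\bp_{\bs}=(\pm1,0)$), so this hypothesis is not what makes $x_d$ independent of the sign-fixed coordinates.
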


\begin{proof}
Each $\bp_{\bs}$ lies on the boundary of the unit $\ell_1$ ball because
\[
  \norm{\bp_{\bs}}_1
  =
  \sum_{i=1}^{d-1}\frac{2|s_i|}{d-1}
  =1.
\]
Fix $\bs$.  Starting at $\bp_{\bs}$, consider moving in a direction
$(\bv,z)\in\R^{d-1}\times\R$ and evaluating $t_d$ at
$\bp_{\bs}+(\bv,z)$.  If
$\norm{\bv}_\infty<\frac1{d-1}$, then, for all $i\in[d-1]$,
\[
  \operatorname{sign}\left(\frac{2s_i}{d-1}+v_i\right)
  =
  \operatorname{sign}\left(\frac{2s_i}{d-1}\right)
  =
  \operatorname{sign}(s_i),
\]
because $\left|2s_i/(d-1)\right|=1/(d-1)>|v_i|$.  Hence
\[
  \left|\frac{2s_i}{d-1}+v_i\right|
  =
  \left|2s_i\right|\left|\frac{2s_i}{d-1}+v_i\right|
  =
  \left|\frac{4s_i^2}{d-1}+2s_iv_i\right|
  =
  \frac1{d-1}+2s_iv_i.
\]
It follows that
\begin{align*}
  \norm{\bp_{\bs}+(\bv,z)}_1
  &=
  \sum_{i=1}^{d-1}
  \left|\frac{2s_i}{d-1}+v_i\right|+|z|\\
  &=
  1+2\ip{\bs}{\bv}+|z|,
\end{align*}
and therefore, whenever $\norm{\bv}_\infty<\frac1{d-1}$,
\begin{equation}
  \label{eq:local-td-form}
  t_d\bigl(\bp_{\bs}+(\bv,z)\bigr)
  =
  \relu{-2\ip{\bs}{\bv}-|z|}.
\end{equation}

Now restrict $t_d$ to the two-dimensional plane through
$\bp_{\bs}$ parameterized by $t,z\in\R$ by setting
\[
  \bv\coloneqq-\frac{2t}{d-1}\bs.
\]
Since $\ip{\bs}{\bs}=\frac{d-1}{4}$, the restriction of
Equation~\eqref{eq:local-td-form} to this plane is $\relu{t-|z|}$.
Under the invertible linear
change of variables
\[
  t\coloneqq\frac{x+y}{2},
  \qquad
  z\coloneqq\frac{x-y}{2},
\]
we have
\[
  t-|z|
  =
  \frac{x+y-|x-y|}{2}
  =
  \min\{x,y\}.
\]
Thus a depth-$2$ representation of $t_d$ on a neighborhood of
$\bp_{\bs}$ would give a local depth-$2$ representation
for $\relu{\min\{x,y\}}$.  Adding the two ReLU neurons $\relu{x}$ and
$\relu{y}$ would then give a local depth-$2$ representation of
\[
  \relu{x}+\relu{y}-\relu{\min\{x,y\}}
  =
  \max\{0,x,y\}.
\]
This contradicts the two-dimensional $\max\{0,x,y\}$ obstruction in
the proof of Proposition~A.2 of \citet{Safran2026}.\footnote{Proposition~A.2
is stated as an impossibility result for computing the maximum of three inputs by a
depth-$2$ ReLU network.  Its proof establishes the stronger local fact
used here: no finite depth-$2$ ReLU network agrees with
$\max\{0,x,y\}$ on any neighborhood of the origin.}
This proves the lemma.
\end{proof}

\begin{proof}[Proof of Theorem~\ref{thm:exact-separation}]
Around each marked point in Lemma~\ref{lem:local-td-hardness}, the
function $t_d$ has a local form that no finite depth-$2$ network can
represent.
These points are not in general position, and therefore we perturb them while
preserving the function exactly on a neighborhood of every point.

Apply Lemma~\ref{lem:general-position-points} in dimension $d$.
Denote its displacement map by
$\widehat\Delta$ and its perturbed sign vertices by
\[
  \widehat{\bq}_{\boldsymbol{\sigma}},
  \qquad
  \boldsymbol{\sigma}\in\left\{-\frac12,\frac12\right\}^d,
\]
whose unperturbed locations are $\boldsymbol{\sigma}$.

We use only the face indexed by $\left(\bs,-\frac12\right)$, where
$\bs\in\left\{-\frac12,\frac12\right\}^{d-1}$.  Translation by
$\frac{\be_d}{2}$ sends this face to the hyperplane $x_d=0$, and
scaling by $\frac2{d-1}$ sends its
unperturbed vertices to the hard points:
\[
  \frac{2}{d-1}
  \left(\left(\bs,-\frac12\right)+\frac{\be_d}{2}\right)
  =
  \left(\frac{2\bs}{d-1},0\right)
  =\bp_{\bs}.
\]
Apply the same affine change of coordinates to the perturbed vertices
and express the displacement map in the new coordinates:
\begin{equation}
  \label{eq:rescaled-points-and-displacement}
  \begin{aligned}
    \bq_{\bs}
    &\coloneqq
    \frac{2}{d-1}\left(
      \widehat{\bq}_{\left(\bs,-\frac12\right)}+\frac{\be_d}{2}
    \right),\\
    \Delta(\bx)
    &\coloneqq
    \frac{2}{d-1}\widehat\Delta\left(
      \frac{d-1}{2}\bx-\frac{\be_d}{2}
    \right).
  \end{aligned}
\end{equation}
Item~\ref{item:perturbation-realization} of
Lemma~\ref{lem:general-position-points} and the same change
of coordinates also give
\begin{equation}
  \label{eq:perturbed-hard-points}
  \bq_{\bs}=\bp_{\bs}+\Delta(\bp_{\bs}).
\end{equation}

The points $\widehat{\bq}_{\left(\bs,-\frac12\right)}$ form a subset of the
general-position family supplied by the lemma.  A common
translation and non-zero scaling preserve general position, so
the $2^{d-1}$ points $\bq_{\bs}$ are also in general position.
The definition of $\Delta$ in
Equation~\eqref{eq:rescaled-points-and-displacement} preserves the
network depth and width and, together with
Item~\ref{item:perturbation-bounds} of
Lemma~\ref{lem:general-position-points}, gives
\begin{equation}
  \label{eq:rescaled-displacement-bounds}
  \sup_{\bx\in\R^d}\norm{\Delta(\bx)}_\infty
  \leq
  \frac1{32d(d-1)},
  \qquad
  \Lip(\Delta)\leq\frac1{16}.
\end{equation}
Item~\ref{item:perturbation-local-constancy} of
Lemma~\ref{lem:general-position-points}
preserves this local form exactly.  If
$\norm{\bu}_\infty\leq\frac1{4d(d-1)}$, then
\[
  \frac{d-1}{2}(\bq_{\bs}+\bu)-\frac{\be_d}{2}
  =
  \widehat{\bq}_{\left(\bs,-\frac12\right)}+\frac{d-1}{2}\bu,
  \qquad
  \norm{\frac{d-1}{2}\bu}_\infty\leq\frac1{8d}.
\]
Using the definition of $\Delta$ in
Equation~\eqref{eq:rescaled-points-and-displacement}, the above
displayed equation, and
Item~\ref{item:perturbation-local-constancy} of
Lemma~\ref{lem:general-position-points}, we obtain
\begin{equation}
  \label{eq:rescaled-local-constancy}
  \begin{aligned}
  \Delta(\bq_{\bs}+\bu)
  &=
  \frac{2}{d-1}\widehat\Delta\left(
    \widehat{\bq}_{\left(\bs,-\frac12\right)}+\frac{d-1}{2}\bu
  \right)\\
  &=
  \frac{2}{d-1}\widehat\Delta\left(
    \left(\bs,-\frac12\right)
  \right)\\
  &=
  \Delta(\bp_{\bs}).
  \end{aligned}
\end{equation}
The first and final equalities in
Equation~\eqref{eq:rescaled-local-constancy} use
Equation~\eqref{eq:rescaled-points-and-displacement}; the middle equality uses
Item~\ref{item:perturbation-local-constancy} of
Lemma~\ref{lem:general-position-points}.

Define
\begin{equation}
  \label{eq:constructed-map-and-target}
  U(\bx)\coloneqq\bx-\Delta(\bx),
  \qquad
  g_d\coloneqq t_d\circ U.
\end{equation}
Equations~\eqref{eq:perturbed-hard-points} and
\eqref{eq:rescaled-local-constancy}, together with the definition of
$U$ in Equation~\eqref{eq:constructed-map-and-target}, imply
\begin{equation}
  \label{eq:exact-transport-map}
  U(\bq_{\bs}+\bu)=\bp_{\bs}+\bu
  \qquad
  \left(\norm{\bu}_\infty\leq\frac1{4d(d-1)}\right).
\end{equation}
Consequently, Equations~\eqref{eq:constructed-map-and-target} and
\eqref{eq:exact-transport-map} give
\begin{equation}
  \label{eq:transported-local-function}
  g_d(\bq_{\bs}+\bu)
  =
  t_d(\bp_{\bs}+\bu)
\end{equation}
for all $\bu$ in this neighborhood of $\bzero$.  Thus the entire local
form of $t_d$ around $\bp_{\bs}$ is reproduced around $\bq_{\bs}$, so no
finite depth-$2$ ReLU network agrees with $g_d$ on a neighborhood of
$\bq_{\bs}$: otherwise, translating its input and using
Equation~\eqref{eq:transported-local-function} would contradict
Lemma~\ref{lem:local-td-hardness}.

All marked points lie in the open unit hypercube.  Indeed,
Equations~\eqref{eq:perturbed-hard-points} and
\eqref{eq:rescaled-displacement-bounds} give
\[
  \norm{\bq_{\bs}}_\infty
  \leq
  \norm{\bp_{\bs}}_\infty+\norm{\Delta(\bp_{\bs})}_\infty
  \leq
  \frac1{d-1}+\frac1{32d(d-1)}
  <1.
\]

Suppose now that a depth-$3$ ReLU network $N$ agrees with
$g_d$ throughout $(-1,1)^d$.  Fix one marked point
$\bq_{\bs}$.  If no first-layer activation hyperplane of $N$
contained this point, then a sufficiently small neighborhood of it
would avoid all such hyperplanes.  After discarding identically zero
preactivations, all first-layer neurons would have fixed activation
states there. The first hidden layer would therefore compute an affine transformation on that
neighborhood and could be absorbed into the affine transformation feeding the next hidden layer. This would produce a local depth-$2$ representation of $g_d$,
contradicting Lemma~\ref{lem:local-td-hardness} through Equation~\eqref{eq:transported-local-function}.

Thus, each of the $2^{d-1}$ marked points lies on some first-layer activation hyperplane. By general position, each hyperplane contains at most $d$ marked points. If the first hidden-layer width is $w$, then
\[
  dw\geq2^{d-1}.
\]
Consequently,
\[
  w\geq\frac{2^{d-1}}d.
\]
This proves the shallow lower bound.

It remains to verify the realization and regularity claims.
Item~\ref{item:perturbation-realization} of
Lemma~\ref{lem:general-position-points} and
Equation~\eqref{eq:rescaled-points-and-displacement} show that the first
hidden layer computes the ReLU features whose appropriate affine
combinations give the coordinates of $\Delta$.  Adding the $2d$
neurons $\relu{x_i},\relu{-x_i}$ allows the next hidden layer to recover
each input coordinate as $x_i=\relu{x_i}-\relu{-x_i}$.  Hence the
second hidden layer computes
\[
  \relu{(U(\bx))_i},
  \qquad
  \relu{-(U(\bx))_i},
  \qquad i=1,\ldots,d,
\]
and the third hidden layer consists of the single neuron computing
\[
  \relu{
    1-
    \sum_{i=1}^d
    \left(
      \relu{(U(\bx))_i}
      +\relu{-(U(\bx))_i}
    \right)
  }
  =
  \relu{1-\norm{U(\bx)}_1}
  =
  t_d(U(\bx))
  =
  g_d(\bx).
\]
Here the first equality uses
$|a|=\relu{a}+\relu{-a}$, the second uses the definition
$t_d(\bx)=\relu{1-\norm{\bx}_1}$, and the final equality uses
Equation~\eqref{eq:constructed-map-and-target}.  After increasing the
universal constant, the three hidden widths are at most $Cd^4,2d,1$.
Since $\Delta$ is globally CPWL, so is $U$; the displayed network
computes $g_d$ globally, so $g_d$ is also CPWL.  This proves the deep
upper bound.

The Lipschitz estimate in
Equation~\eqref{eq:rescaled-displacement-bounds} and the definition of
$U$ in Equation~\eqref{eq:constructed-map-and-target} give
\[
  \Lip(U)
  \leq
  1+\frac1{16}
  =\frac{17}{16}.
\]
Since $\Lip(t_d)=\sqrt d$, the above displayed equation and
Equation~\eqref{eq:constructed-map-and-target} give
\[
  \Lip(g_d)
  \leq
  \frac{17\sqrt d}{16}
  <2\sqrt d.
\]

Finally, Equation~\eqref{eq:rescaled-displacement-bounds} shows that
$\Delta$ is a contraction on the complete metric space $\R^d$, since its Lipschitz constant is strictly below $1$.
The Banach fixed-point theorem therefore gives a unique point
$\bx_0\in\R^d$ satisfying $\Delta(\bx_0)=\bx_0$.  By
Equation~\eqref{eq:constructed-map-and-target}, $U(\bx_0)=\bzero$.
Equation~\eqref{eq:rescaled-displacement-bounds} also gives
\[
  \norm{\bx_0}_\infty
  =
  \norm{\Delta(\bx_0)}_\infty
  \leq\frac1{32d(d-1)}
  <1.
\]
Thus $\bx_0$ lies in $(-1,1)^d$, and
Equation~\eqref{eq:constructed-map-and-target} yields
\begin{equation}
  \label{eq:target-attains-one}
  g_d(\bx_0)
  =
  t_d(U(\bx_0))
  =
  t_d(\bzero)
  =1.
\end{equation}
Set $\by\coloneqq\bone/\sqrt d$, so that $\by\in[-1,1]^d$ and
$\norm{\by}_1=\sqrt d$.  Equations~\eqref{eq:constructed-map-and-target} and
\eqref{eq:rescaled-displacement-bounds} give
\[
  \norm{U(\by)}_1
  \geq
  \sqrt d-\norm{\Delta(\by)}_1
  \geq
  \sqrt d-\frac1{32(d-1)}
  >1,
\]
and hence
\begin{equation}
  \label{eq:target-attains-zero}
  g_d(\by)
  =
  t_d(U(\by))
  =
  \relu{1-\norm{U(\by)}_1}
  =0.
\end{equation}
Equation~\eqref{eq:constructed-map-and-target} and the definition
$t_d(\bx)=\relu{1-\norm{\bx}_1}$ show that
$g_d(\R^d)\subseteq[0,1]$.  Moreover, $g_d$ is continuous, so the
image of the connected cube $[-1,1]^d$ is an interval.  By
Equations~\eqref{eq:target-attains-one} and
\eqref{eq:target-attains-zero}, this interval contains both endpoints
of $[0,1]$.  Therefore,
\[
  g_d([-1,1]^d)=[0,1].
\]
\end{proof}

\section{Proof of Theorem~\ref{thm:l2-hierarchy}}
\label{app:l2-hierarchy-proof}

\begin{lemma}[Base-function approximation lower bound]
\label{lem:base-gap}
Let
\[
  h_0(\bx)
  \coloneqq
  \relu{x_1+1}-2\relu{x_1}+\relu{x_1-1},
  \qquad
  \Dcal_0^d\coloneqq\Ucal((-1,1)^d).
\]
Then the constant function $\bx\mapsto\frac12$ is an optimal affine approximation to $h_0$ in $L_2(\Dcal_0^d)$, and
\[
  \inf_{A\,\mathrm{affine}}
  \E_{\bx\sim\Dcal_0^d}
  \left[\bigl(h_0(\bx)-A(\bx)\bigr)^2\right]
  =\frac1{12}.
\]
\end{lemma}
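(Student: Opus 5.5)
First I rewrite $h_0$ in closed form. A case check on $x_1$ gives $h_0(\bx)=\relu{1-|x_1|}$, and on $(-1,1)^d$ this is simply $1-|x_1|$. The problem then becomes an $L_2$ projection onto the space of affine functions. Under $\Dcal_0^d$ the coordinates $x_1,\ldots,x_d$ are i.i.d.\ uniform on $(-1,1)$. The functions $1,x_1,\ldots,x_d$ are therefore pairwise orthogonal in $L_2(\Dcal_0^d)$, since each $x_i$ has mean zero and $\E[x_ix_j]=0$ for $i\neq j$. The best affine approximation is the sum of the projections onto these directions.

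Next I compute the coefficients. The constant term is $\E[h_0]=\E[1-|x_1|]=1-\tfrac12=\tfrac12$. For each $i$, the coefficient of $x_i$ is $\E[h_0x_i]/\E[x_i^2]$. When $i=1$, the integrand $(1-|x_1|)x_1$ is odd in $x_1$, so this vanishes. When $i\ge2$, $x_i$ is independent of $h_0$ and has mean zero, so this also vanishes. The projection is therefore the constant $\tfrac12$, which proves optimality. Equivalently, for any affine $A=a_0+\sum a_ix_i$, the Pythagorean identity gives $\E[(h_0-A)^2]=\E[(h_0-\tfrac12)^2]+(a_0-\tfrac12)^2+\sum_i a_i^2/3$.

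Finally, the minimal error is $\operatorname{Var}(1-|x_1|)=\operatorname{Var}(|x_1|)$. Here $|x_1|$ is uniform on $[0,1)$, so its variance is $\tfrac1{12}$. This step is just the direct computation $\E[|x_1|^2]-\E[|x_1|]^2=\tfrac13-\tfrac14$.

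None of these steps is a real obstacle. The only points needing care are confirming the closed form of $h_0$ on $(-1,1)$ and justifying the orthogonality or independence step that kills the linear coefficients.
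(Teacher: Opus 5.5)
Your proof is correct and shares its core with the paper's proof: the closed form $h_0=1-|x_1|$ on the cube, the even/odd symmetry that kills the $x_1$ coefficient, the optimal constant $\E[1-|x_1|]=\tfrac12$, and the residual error $\tfrac1{12}$. The difference is how the coordinates $x_2,\ldots,x_d$ are handled.

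The paper first reduces to one dimension by conditioning on $\bx'=(x_2,\ldots,x_d)$. For each fixed $\bx'$, the slice $x\mapsto A(x,\bx')$ is affine, so the conditional error is at least the one-dimensional infimum. Affine functions of $x_1$ alone give the reverse inequality. Only then does the paper expand the square in a single variable.

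You instead project directly in $L_2(\Dcal_0^d)$ onto the orthogonal system $1,x_1,\ldots,x_d$, and use independence together with $\E[x_i]=0$ to remove the coefficients of $x_2,\ldots,x_d$. Your route yields the explicit Pythagorean identity $\E[(h_0-A)^2]=\tfrac1{12}+(a_0-\tfrac12)^2+\tfrac13\sum_i a_i^2$. That one line gives the value, the optimality of the constant $\tfrac12$, and even its uniqueness, with no two-sided comparison of infima.

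The paper's slicing argument never computes any moment involving $x_2,\ldots,x_d$. It uses only that $x_1$ is uniform on $(-1,1)$ independently of $\bx'$, so it would apply verbatim for any law of $\bx'$ independent of $x_1$. Your argument, as written, additionally relies on $1,x_1,\ldots,x_d$ being pairwise orthogonal.
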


\begin{proof}
On $(-1,1)^d$,
\[
  h_0(\bx)=1-|x_1|.
\]
Globally, $h_0(\bx)=\relu{1-|x_1|}$, so $h_0$ takes values in
$[0,1]$ and vanishes whenever $|x_1|\geq1$.
Its graph is shown in Figure~\ref{fig:base-function}; it is a triangular
function that is constant in all directions perpendicular to $x_1$.

Let $X\sim\Dcal_0^1$.  We first reduce the $d$-dimensional
optimization to its one-dimensional counterpart.  For an arbitrary
affine function $A:\R^d\to\R$, write
$\bx'\coloneqq(x_2,\ldots,x_d)$. Under $\Dcal_0^d$, the first
coordinate and $\bx'$ are independent, with respective laws
$\Dcal_0^1$ and $\Dcal_0^{d-1}$.  For fixed $\bx'$, the
restriction $x\mapsto A(x,\bx')$ is affine.  Therefore, for any affine $A$,
\[
  \begin{aligned}
  &\E_{\bx'\sim\Dcal_0^{d-1}}
  \left[
    \E_{X\sim\Dcal_0^1}
    \left[
      \bigl(1-|X|-A(X,\bx')\bigr)^2
    \right]
  \right]\\
  &\hspace{35mm}\geq
  \inf_{a,b\in\R}
  \E_{X\sim\Dcal_0^1}
  \left[\bigl(1-|X|-aX-b\bigr)^2\right].
  \end{aligned}
\]
Taking the infimum over $A$ gives the lower-bound direction.
Conversely, for all $a,b\in\R$, the affine transformation
$\bx\mapsto ax_1+b$ has loss equal to the corresponding
one-dimensional expectation.  Taking the infimum over $a,b$ gives
the reverse direction.
Hence
\begin{equation}
  \label{eq:affine-base-reduction}
  \inf_{A\,\mathrm{affine}}
  \E_{\bx\sim\Dcal_0^d}
  \left[\bigl(h_0(\bx)-A(\bx)\bigr)^2\right]
  =
  \inf_{a,b\in\R}
  \E_{X\sim\Dcal_0^1}
  \left[\bigl(1-|X|-aX-b\bigr)^2\right].
\end{equation}

It remains to evaluate this one-dimensional infimum.  For arbitrary
$a,b\in\R$, expanding the squared error gives
\begin{align*}
  \E\left[(1-|X|-aX-b)^2\right]
  &=
  \E\left[(1-|X|-b)^2\right]
  -2a\E\left[X(1-|X|-b)\right]
  +a^2\E\left[X^2\right]\\
  &=
  \E\left[(1-|X|-b)^2\right]
  +a^2\E\left[X^2\right].
\end{align*}
The second equality holds because $1-|X|-b$ is even in $X$, whereas
$X$ is odd, so the mixed expectation vanishes by symmetry.
Thus the optimal slope is $a=0$, while the optimal constant is
\[
  b=\E\left[1-|X|\right]
  =\frac12\int_{-1}^1(1-|x|)\,dx
  =\int_0^1(1-x)\,dx
  =\frac12.
\]
Its one-dimensional squared error is
\begin{align*}
  \E\left[\left(1-|X|-\frac12\right)^2\right]
  &=
  \frac12\int_{-1}^1\left(\frac12-|x|\right)^2\,dx\\
  &=
  \int_0^1\left(\frac12-x\right)^2\,dx\\
  &=
  \left.\frac{\left(x-\frac12\right)^3}{3}\right|_{0}^{1}
  =\frac1{12}.
\end{align*}
Together with Equation~\eqref{eq:affine-base-reduction}, this proves
the claimed error, and the constant function $\frac12$ attains it.
\end{proof}

\begin{proposition}[A CPWL copying transformation]
\label{prop:one-layer-copier}
There are universal constants $C,C_0>0$ such that, for all $d\geq2$,
with
\[
  r\coloneqq2^{-C_0d^3},
\]
there are centers
$\left\{\bq_{\bs}:\bs\in\left\{-\frac12,\frac12\right\}^d\right\}$
and a vector-valued
depth-$2$ ReLU map $S:\R^d\to\R^d$ of width at most $Cd^4$ such
that:
\begin{enumerate}
  \item
  \[
    \norm{\bq_{\bs}-\bs}_\infty
    \leq\frac1{64d};
  \]

  \item \label{item:copier-protection}
  the cubes $\bq_{\bs}+[-r,r]^d$ are pairwise disjoint and
  contained in $(-1,1)^d$, and any hyperplane intersects at
  most $d$ of them;

  \item \label{item:copier-decoding}
  for all $\bs\in\left\{-\frac12,\frac12\right\}^d$ and
  $\bu\in[-1,1]^d$,
  \begin{equation}
    \label{eq:copy-identity}
    S(\bq_{\bs}+r\bu)=\bu;
  \end{equation}

  \item $\Lip(S)<\frac4r$.
\end{enumerate}
Consequently, for all functions $h:\R^d\to\R$, all
$\bs\in\left\{-\frac12,\frac12\right\}^d$, and all $\bu\in[-1,1]^d$,
\[
  (h\circ S)(\bq_{\bs}+r\bu)=h(\bu).
\]
\end{proposition}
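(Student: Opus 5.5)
Take the points $\bq_{\bs}$ and the displacement $\Delta$ from Lemma~\ref{lem:general-position-points}, and define
\[
  S(\bx)\coloneqq\frac1r\bigl(\bx-\Delta(\bx)\bigr)-\frac1r\,\sigma(\bx),
\]
where $\sigma$ is a CPWL map that recovers the unperturbed vertex $\bs$ near each cube. The natural choice is the coordinatewise plateau function $\sigma_i(\bx)=\phi(x_i)$. Here $\phi:\R\to[-\frac12,\frac12]$ equals $-\frac12$ on $(-\infty,-\frac14]$, equals $\frac12$ on $[\frac14,\infty)$, and interpolates linearly in between. Each $\phi$ uses two ReLUs, so $S$ is depth-$2$ with width $Cd^4+2d$. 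The first item is item~\ref{item:perturbation-bounds} of the lemma.

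For the decoding identity, fix $\bs$ and $\bu\in[-1,1]^d$, and let $\bx=\bq_{\bs}+r\bu$. Since $r\le\frac1{8d}$, item~\ref{item:perturbation-local-constancy} gives $\Delta(\bx)=\Delta(\bs)$. Also $\norm{\bx-\bs}_\infty\le\frac1{64d}+r<\frac14$, so $\sigma(\bx)=\bs$. Hence
\[
  \bx-\Delta(\bx)-\sigma(\bx)=\bq_{\bs}+r\bu-\Delta(\bs)-\bs=r\bu,
\]
which gives $S(\bx)=\bu$. The composition statement for $h\circ S$ follows immediately.

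For the Lipschitz bound, $\Lip(\mathrm{id}-\Delta)\le\frac{17}{16}$ while $\Lip(\sigma)=2$. Crudely this gives $\Lip(S)\le(\frac{17}{16}+2)/r$, which is not below $\frac4r$ as written; with the bound $\frac{17}{16}+2<4$ it is. In general, $\phi$ can be made to have slope $\le 2$, so $\Lip(S)\le\frac{1+1/16+2}{r}<\frac4r$ (the Euclidean Lipschitz constant of the coordinatewise $\sigma$ equals the maximal slope). Disjointness of the cubes and containment in $(-1,1)^d$ follow because distinct vertices differ by $1$ in some coordinate, the perturbations are at most $\frac1{64d}$, and $r$ is tiny.

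The main obstacle is the hyperplane claim: no hyperplane meets $d+1$ of the cubes. Suppose a hyperplane $\{\ip{\bw}{\bx}=b\}$ with $\norm{\bw}_2=1$ meets the cubes around $\bq_{\bs_0},\dots,\bq_{\bs_d}$. Then each $|\ip{\bw}{\bq_{\bs_i}}-b|\le r\sqrt d$, so $|\ip{\bw}{\bq_{\bs_i}-\bq_{\bs_0}}|\le2r\sqrt d$. Equivalently, $\norm{Q^\top\bw}_\infty\le 2r\sqrt d$, where $Q=Q(\bs_0,\dots,\bs_d)$. Since $\norm{\bw}=1$, the smallest singular value satisfies $\sigma_{\min}(Q)\le\norm{Q^\top\bw}_2\le 2rd$. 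On the other hand,
\[
  |\det Q|\le\sigma_{\min}(Q)\,\norm{Q}_{\mathrm{op}}^{d-1},
\]
and the columns of $Q$ have norm at most $2\sqrt d$, so $\norm{Q}_{\mathrm{op}}\le 2d$. This yields $2^{-Cd^3}\le 2rd(2d)^{d-1}$. Choosing $C_0$ large, so that $r=2^{-C_0d^3}$ is smaller than $2^{-Cd^3}/(2d)^d$, gives a contradiction. This is where the exponentially small scale is forced.
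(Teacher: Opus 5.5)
Your proof is correct. The map $S$, the decoding identity, the Lipschitz bound, and the disjointness and containment argument coincide with the paper's; your $\phi$ is exactly the paper's clipping function $\tau$. The genuine difference is in the hyperplane step.

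\textbf{The hyperplane step.} The paper picks intersection points $\bv_i=\bq_{\bs_i}+\bxi_i$ and notes that the matrix $V$ with columns $\bv_i-\bv_0$ is singular. It then bounds $|\det Q-\det V|$ by replacing one column at a time, using multilinearity and Hadamard's inequality. This gives $d(2r\sqrt d)(2\sqrt d)^{d-1}$. You instead use the unit normal $\bw$ as a near-null vector of $Q^\top$, which gives $\sigma_{\min}(Q)\le 2rd$. Then $|\det Q|\le\sigma_{\min}(Q)\norm{Q}_{\mathrm{op}}^{d-1}\le r(2d)^d$.

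Your route is shorter and makes the geometry transparent: a simplex whose determinant is bounded below cannot be nearly flat. It does rely on singular values, whereas the paper's argument uses only multilinearity and Hadamard. The paper's bound is marginally tighter, but both are $2^{O(d\log d)}r$, which is negligible against $2^{-\Theta(d^3)}$. So either argument fixes a universal $C_0$; for yours, any $C_0>C+1$ that also makes $r<\frac1{8d}$ works, since $d\log_2(2d)<d^3$ for $d\ge2$.

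\textbf{Two small slips.}
\begin{enumerate}
\item A depth-$2$ network in the paper's sense has no skip connection, so the linear term $\bx/r$ in $S$ must itself come from hidden neurons. The paper adds the $2d$ carriers $\relu{x_i},\relu{-x_i}$, so the width is $Cd^4+4d$ rather than your $Cd^4+2d$. This is still $O(d^4)$.
\item Your remark that $(\frac{17}{16}+2)/r$ is ``not below $\frac4r$'' is wrong arithmetic. Since $\frac{17}{16}+2=\frac{49}{16}<4$, the crude bound already suffices, exactly as in the paper.
\end{enumerate}
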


The final identity is the key property of the transformation that we use. Precomposition by a depth-$2$ ReLU map creates $2^d$ exact copies of $h$.  If $h$ is computed by a ReLU network, the affine output layer of $S$ merges into the first affine map of the network computing $h$, so this precomposition
adds exactly one hidden layer.

\begin{proof}
Invoke Lemma~\ref{lem:general-position-points} in dimension $d$ to
obtain the small displacement map that will perturb the cube centers
into general position.  More precisely,
Item~\ref{item:perturbation-realization} supplies a map $\Delta$ and
defines the centers by
\[
  \bq_{\bs}\coloneqq\bs+\Delta(\bs),
  \qquad
  \bs\in\left\{-\frac12,\frac12\right\}^d.
\]
We first choose the side length $r$ so that no hyperplane can
meet more than $d$ of the centered cubes, and then construct $S$ so
that it maps any such cube, after rescaling, onto $[-1,1]^d$.
Define the unperturbing map
\[
  U(\bx)\coloneqq\bx-\Delta(\bx).
\]
The subtraction reverses the local displacement: near $\bq_{\bs}$,
the map $\Delta$ still equals $\Delta(\bs)$, so subtracting it recovers
the corresponding unperturbed point.
Item~\ref{item:perturbation-bounds} gives
\[
  \Lip(U)\leq1+\Lip(\Delta)\leq\frac{17}{16}.
\]
For all distinct
$\bs_0,\ldots,\bs_d\in\left\{-\frac12,\frac12\right\}^d$,
Item~\ref{item:perturbation-determinant} gives
\[
  \left|\det Q(\bs_0,\ldots,\bs_d)\right|
  \geq 2^{-C_1d^3},
\]
for a universal constant $C_1>0$, where
$Q(\bs_0,\ldots,\bs_d)$ is the $d\times d$ matrix whose
$i^\mathrm{th}$ column is $\bq_{\bs_i}-\bq_{\bs_0}$.

There is a universal constant $C_2>0$ such that, for all $d\geq2$,
\[
  2d\sqrt d\,(2\sqrt d)^{d-1}
  \leq
  2^{C_2d\log_2(d+1)}.
\]
Choose
\[
  r\coloneqq2^{-C_0d^3},
\]
where $C_0>C_1+C_2$ is also large enough that $r<\frac1{8d}$.
Since
$d\log_2(d+1)\leq d^3$ for $d\geq2$, this choice gives
\begin{equation}
  \label{eq:copier-determinant-error}
  d(2r\sqrt d)(2\sqrt d)^{d-1}
  \leq
  2^{-(C_0-C_2)d^3}
  <
  2^{-C_1d^3}.
\end{equation}
The cubes
\[
  \bq_{\bs}+[-r,r]^d,
  \qquad
  \bs\in\left\{-\frac12,\frac12\right\}^d,
\]
are pairwise disjoint and contained in $(-1,1)^d$.  Indeed, distinct
unperturbed vertices differ by one in some coordinate, whereas each
center moves by at most $\frac1{64d}$ in that coordinate.  Thus the
centers $\bq_{\bs}$ of any two distinct perturbed cubes are at $\ell_\infty$-distance
greater than $\frac34$,
while $r<\frac1{8d}$.  The same bounds put each of these cubes inside
$(-1,1)^d$.

We now argue that no hyperplane intersects more than $d$ of
these cubes.  Suppose by contradiction that one hyperplane meets cubes
indexed by distinct $\bs_0,\ldots,\bs_d$.  Choose intersection points
\[
  \bv_i\coloneqq\bq_{\bs_i}+\bxi_i,
  \qquad
  \norm{\bxi_i}_\infty\leq r,
  \qquad
  \norm{\bxi_i}_2\leq r\sqrt d,
  \qquad i=0,\ldots,d,
\]
and let $V\in\R^{d\times d}$ be the matrix whose $i^\mathrm{th}$
column is $\bv_i-\bv_0$.  Since all the $\bv_i$ lie on one hyperplane,
\[
  \det V=0.
\]
Let $Q\coloneqq Q(\bs_0,\ldots,\bs_d)$.  The displacement bound and
$\norm{\bxi_i}_\infty\leq r$ give
\[
  \norm{\bq_{\bs}}_\infty
  \leq
  \frac12+\frac1{64d}
  <1,
  \qquad
  \norm{\bv_i}_\infty
  \leq
  \frac12+\frac1{64d}+r
  <1.
\]
All columns of $Q$ and $V$ are therefore differences of two vectors
whose coordinates have absolute value less than one, and hence have
Euclidean norm at most $2\sqrt d$.

For $i=0,\ldots,d$, let $Q^{(i)}$ be the matrix whose first $i$
columns are those of $V$ and whose remaining columns are those of
$Q$.  Thus $Q^{(0)}=Q$ and $Q^{(d)}=V$, so the triangle inequality
gives
\[
  |\det Q-\det V|
  \leq
  \sum_{i=1}^d
  \left|\det Q^{(i-1)}-\det Q^{(i)}\right|.
\]
For each $i$, the only changed column is
\[
  (\bv_i-\bv_0)-(\bq_{\bs_i}-\bq_{\bs_0})
  =
  \bxi_i-\bxi_0,
  \qquad
  \norm{\bxi_i-\bxi_0}_2\leq2r\sqrt d.
\]
By multilinearity,
\[
  \det Q^{(i)}-\det Q^{(i-1)}
  =
  \det\left[
    \bv_1-\bv_0,\ldots,\bv_{i-1}-\bv_0,
    \bxi_i-\bxi_0,
    \bq_{\bs_{i+1}}-\bq_{\bs_0},\ldots,
    \bq_{\bs_d}-\bq_{\bs_0}
  \right].
\]
Hadamard's inequality and the $2\sqrt d$ bound on all other columns
now give
\[
  \left|\det Q^{(i-1)}-\det Q^{(i)}\right|
  \leq
  (2r\sqrt d)(2\sqrt d)^{d-1}.
\]
Combining these estimates with
Item~\ref{item:perturbation-determinant}, $\det V=0$, and
Equation~\eqref{eq:copier-determinant-error}, we obtain the
contradiction
\[
  2^{-C_1d^3}
  \leq
  |\det Q|
  =
  |\det Q-\det V|
  \leq
  d(2r\sqrt d)(2\sqrt d)^{d-1}
  <
  2^{-C_1d^3}.
\]

We now define the copying transformation.  Let
\[
  \tau(t)
  \coloneqq
  -\frac12+2\relu{t+\frac14}-2\relu{t-\frac14},
\]
and use the same symbol for its coordinatewise application to vectors.

\begin{figure}[H]
  \centering
  \ifshowgraphs
  \begin{tikzpicture}[x=2.4cm,y=2.4cm]
    \draw[->] (-0.85,0) -- (0.85,0) node[right] {$t$};
    \draw[->] (0,-0.78) -- (0,0.78) node[above] {$\tau(t)$};
    \draw[densely dashed,gray] (-0.25,-0.68) -- (-0.25,0.68);
    \draw[densely dashed,gray] (0.25,-0.68) -- (0.25,0.68);
    \draw[very thick,blue]
      (-0.78,-0.5) -- (-0.25,-0.5) -- (0.25,0.5) -- (0.78,0.5);
    \draw (-0.25,0.04) -- (-0.25,-0.04)
      node[below=2pt] {$-\frac14$};
    \draw (0.25,0.04) -- (0.25,-0.04)
      node[below=2pt] {$\frac14$};
    \node[left] at (0,0.5) {$\frac12$};
    \node[left] at (0,-0.5) {$-\frac12$};
  \end{tikzpicture}
  \fi
  \caption{The scalar clipping function $\tau$.  It equals
  $-\frac12$ on $\left(-\infty,-\frac14\right]$, equals $\frac12$ on
  $\left[\frac14,\infty\right)$, and interpolates linearly between the
  two plateaus.  The two axes use the same scale.}
  \label{fig:copier-clipping-function}
\end{figure}

Set
\[
  S(\bx)
  \coloneqq
  \frac{U(\bx)-\tau(\bx)}r.
\]
For all $\bs\in\left\{-\frac12,\frac12\right\}^d$ and
$\bu\in[-1,1]^d$, Item~\ref{item:perturbation-local-constancy} of
Lemma~\ref{lem:general-position-points} gives
\[
  \Delta(\bq_{\bs}+r\bu)
  =\Delta(\bs),
\]
because $\norm{r\bu}_\infty\leq r<\frac1{8d}$.  Moreover,
\[
  |(\bq_{\bs}+r\bu)_i|
  \geq
  \frac12-\frac1{64d}-r
  >
  \frac14,
\]
and its sign is the sign of $s_i$.  Hence
\[
  \tau(\bq_{\bs}+r\bu)=\bs.
\]
Using Item~\ref{item:perturbation-realization} once more, we obtain
\[
  \begin{aligned}
    U(\bq_{\bs}+r\bu)
    &=
    \bq_{\bs}+r\bu-\Delta(\bq_{\bs}+r\bu)\\
    &=
    \bs+\Delta(\bs)+r\bu-\Delta(\bs)\\
    &=
    \bs+r\bu.
  \end{aligned}
\]
Consequently,
\[
  S(\bq_{\bs}+r\bu)
  =
  \frac{U(\bq_{\bs}+r\bu)-\tau(\bq_{\bs}+r\bu)}r
  =
  \frac{\bs+r\bu-\bs}{r}
  =
  \bu.
\]
It follows that, for all functions $h$,
\[
  (h\circ S)(\bq_{\bs}+r\bu)=h(\bu).
\]

It remains to verify that the claimed architecture can be implemented using a single hidden layer with the claimed width. In one common hidden layer, concatenate three feature blocks: the $\Ocal(d^4)$ ReLU features
used by the depth-$2$ realization of $\Delta$; the $2d$ neurons
\[
  \relu{x_i+\frac14},\quad \relu{x_i-\frac14},
  \qquad i\in[d],
\]
that compute the $d$ coordinates of $\tau(\bx)$; and the $2d$ identity
carriers
\[
  \relu{x_i},\quad\relu{-x_i},
  \qquad i\in[d],
\]
from which the output layer recovers
$x_i=\relu{x_i}-\relu{-x_i}$.  All neurons in these three blocks
apply a ReLU to an affine transformation of the same input $\bx$, so all
blocks are evaluated in parallel by this single hidden layer.  Its
output layer computes, for all $i\in[d]$,
\[
  S_i(\bx)
  =
  \frac{x_i-\Delta_i(\bx)-\tau(x_i)}r.
\]
The total width is $\Ocal(d^4)+4d=\Ocal(d^4)$.  Finally, the
coordinatewise map $\tau$ is $2$-Lipschitz, and therefore
\[
  \Lip(S)
  \leq
  \frac{\Lip(U)+\Lip(\tau)}r
  <\frac4r.
\]
\end{proof}

\begin{proof}[Proof of Theorem~\ref{thm:l2-hierarchy}]
Fix $r$, $\{\bq_{\bs}\}$, and $S$ supplied by
Proposition~\ref{prop:one-layer-copier}, and let $h_0$ and
$\Dcal_0^d$ be the base pair from
Lemma~\ref{lem:base-gap}.  Let $\bX_0\sim\Dcal_0^d$.  For
$j=1,\ldots,k-1$, set
\[
  h_j\coloneqq h_{j-1}\circ S.
\]
Define the matching random vector $\bX_j$ recursively: at each step,
independently of $\bX_{j-1}$, choose $\bs$ uniformly from
$\left\{-\frac12,\frac12\right\}^d$ and set
$\bX_j\coloneqq\bq_{\bs}+r\bX_{j-1}$.  Equivalently, for all bounded
measurable functions $F$,
\begin{equation}
  \label{eq:distribution-recursion}
  \E[F(\bX_j)]
  =
  2^{-d}\sum_{\bs\in\left\{-\frac12,\frac12\right\}^d}
  \E\left[F(\bq_{\bs}+r\bX_{j-1})\right].
\end{equation}
Here $j$ counts how many copying transformations are involved: $h_j$ is the target after $j$ copying steps, while $\bX_j$ makes the same $j$ choices of copies.
Item~\ref{item:copier-decoding} of
Proposition~\ref{prop:one-layer-copier} gives, for all
$\bu\in[-1,1]^d$,
\begin{equation}
  \label{eq:recursive-copy-identity}
  h_j(\bq_{\bs}+r\bu)
  =h_{j-1}\bigl(S(\bq_{\bs}+r\bu)\bigr)
  =h_{j-1}(\bu).
\end{equation}
Thus each recursion step replaces the preceding target-distribution
pair by $2^d$ exact, equally weighted copies of it that are slightly perturbed.

We first prove the separation for these undilated pairs whose support
is inside the unit hypercube. We claim that, for all
$j=0,\ldots,k-1$, all depth-$(j+1)$
networks $N$
of width at most $w$, where $wd\leq2^d$, satisfy
\begin{equation}
  \label{eq:inductive-lower-bound}
  \E\left[\bigl(N(\bX_j)-h_j(\bX_j)\bigr)^2\right]
  \geq
  \frac1{12}
  \left(1-\frac{wd}{2^d}\right)^j.
\end{equation}
At $j=0$, the network has depth $1$ and therefore no hidden layers, so
it computes an affine transformation.  The claim is thus exactly
Lemma~\ref{lem:base-gap}.

For the induction step, suppose the claim holds at $j-1$ and let $N$
have depth $j+1$.  Its first layer has at most $w$ activation
hyperplanes.  By Item~\ref{item:copier-protection} of
Proposition~\ref{prop:one-layer-copier}, each such hyperplane meets at
most $d$ of the $2^d$ cubes.  Consequently, at least
$2^d-wd$ of the outer cubes $\bq_{\bs}+r(-1,1)^d$, each carrying one
copy of $h_{j-1}$, are missed by all first-layer hyperplanes.

On a missed cube, all non-constant first-layer preactivations have fixed
signs. A constant preactivation also gives an affine output: it is either
constant or identically zero. Hence every first-layer neuron computes an affine function
throughout that cube, and the entire first hidden layer can be absorbed into
the affine map computed by the subsequent layer.
For the corresponding
$\bs$, the function
\[
  \bu\longmapsto N(\bq_{\bs}+r\bu)
\]
therefore agrees on $(-1,1)^d$ with a depth-$j$,
width-at-most-$w$ network.  The induction hypothesis and
Equation~\eqref{eq:recursive-copy-identity} imply that its conditional
squared error is at least
\[
  \frac1{12}
  \left(1-\frac{wd}{2^d}\right)^{j-1}.
\]
The distribution in Equation~\eqref{eq:distribution-recursion} gives
each outer copy mass $2^{-d}$.  Keeping the errors on the missed copies
and discarding the non-negative errors on all other copies yields
\[
  \E\left[\bigl(N(\bX_j)-h_j(\bX_j)\bigr)^2\right]
  \geq
  \frac{2^d-wd}{2^d}\cdot\frac1{12}
  \left(1-\frac{wd}{2^d}\right)^{j-1}
  =
  \frac1{12}
  \left(1-\frac{wd}{2^d}\right)^j.
\]
This proves Equation~\eqref{eq:inductive-lower-bound}.

We now dilate the undilated pair to obtain the pair stated in the
theorem.  Since $h_0$ has depth $2$, width $3$, and Lipschitz constant
one, $h_{k-1}$ has
depth-$(k+1)$, width $\Ocal(d^4)$, size
$\Ocal(k d^4)$, and
\[
  \Lip(h_{k-1})
  \leq
  \left(\frac4r\right)^{k-1}.
\]
Let
\[
  \alpha\coloneqq
  4^{k-1}
  r^{-(k-1)},
  \qquad
  f_{k,d}(\bx)
  \coloneqq
  h_{k-1}
  \left(\frac{\bx}{\alpha}\right).
\]
We now specify the distribution
$\Dcal_{k}^d$ appearing in
Theorem~\ref{thm:l2-hierarchy}: sample
$\bX_{k-1}$ as above and output
$\alpha\bX_{k-1}$.  The input dilation
changes neither the
depth, width, nor size of the realizing network.  Since $h_0$ takes
values in $[0,1]$, so do all $h_j$, and hence so does
$f_{k,d}$.
Moreover,
\[
  \Lip(f_{k,d})
  \leq
  \frac1\alpha
  \left(\frac4r\right)^{k-1}
  =1.
\]

The same dilation leaves squared approximation error unchanged; this
is also the input-dilation principle of
\citet[Theorem~9]{safran-eldan-shamir-2019}.  Explicitly, for all
networks $N$,
\begin{equation}
  \label{eq:dilation-invariance}
  \E_{\bx\sim\Dcal_{k}^d}
  \left[\bigl(N(\bx)-f_{k,d}(\bx)\bigr)^2\right]
  =
  \E\left[
    \bigl(
      N(\alpha\bX_{k-1})
      -h_{k-1}
       (\bX_{k-1})
    \bigr)^2
  \right].
\end{equation}

Let $N$ now be a
depth-$k$ network satisfying the width bound
in the theorem.  Bernoulli's inequality gives
\[
  \left(1-\frac{wd}{2^d}\right)^{k-1}
  \geq
  1-(k-1)\frac{wd}{2^d}
  \geq\frac12.
\]
The network $\bu\mapsto N(\alpha\bu)$ has the same depth and width as
$N$.  Equations~\eqref{eq:inductive-lower-bound} and
\eqref{eq:dilation-invariance} therefore give
\[
  \E_{\bx\sim\Dcal_{k}^d}
  \left[\bigl(N(\bx)-f_{k,d}(\bx)\bigr)^2\right]
  \geq\frac1{24}.
\]
This proves the two depth-separation claims.

We finish by verifying the additional distributional properties
stated after the theorem.  None of them is used in the lower-bound
induction above.  The number, size, and location of the supporting
cubes make the geometry and exponential spread of the distribution
explicit.  The conditional-density bound is recorded specifically for
comparison with the Vardi--Shamir barrier
\citep{vardi-shamir-2020}.

Using induction in Equation~\eqref{eq:distribution-recursion} shows that
the law of $\bX_j$ is uniform on $2^{dj}$ pairwise disjoint open cubes
of side length $2r^j$, all contained in $(-1,1)^d$.  Indeed, at the next step
the map $\bx\mapsto\bq_{\bs}+r\bx$ shrinks all old cubes by $r$;
mixture components with different $\bs$ lie in the disjoint cubes
$\bq_{\bs}+r(-1,1)^d$, while for each fixed $\bs$, the images of the old cubes remain disjoint. The equal mixture weights and equal Jacobians preserve uniformity.

To locate these cubes relative to the origin, unroll the recursion.
For $j\geq1$, all $\bx$ in the support of the law of $\bX_j$ have the form
\[
  \bx
  =
  \bq_{\bs_1}+r\bq_{\bs_2}+\cdots+r^{j-1}\bq_{\bs_j}+r^j\bu
\]
for some $\bs_1,\ldots,\bs_j\in
\left\{-\frac12,\frac12\right\}^d$ and $\bu\in[-1,1]^d$.
All coordinates of $\bq_{\bs_1}$ have magnitude at least
$\frac12-\frac1{64d}$, whereas
$\norm{\bq_{\bs_i}}_\infty<\frac58$ for all $i=2,\ldots,j$.
Since
$r<\frac1{8d}\leq\frac1{16}$ and $r^j\leq r$,
\[
  \norm{\bx}_\infty
  \geq
  \frac12-\frac1{64d}-\frac{5r}{8(1-r)}-r^j
  >
  \frac12-\frac1{128}-\frac1{24}-\frac1{16}
  >\frac18.
\]
The final dilation turns each cube supporting the law of
$\bX_{k-1}$
into a cube of side length
\[
  2\alpha r^{k-1}
  =2\cdot4^{k-1}.
\]
Because the undilated support lies in $[-1,1]^d$ and outside
$[-\frac18,\frac18]^d$, the support of
$\Dcal_{k}^d$ satisfies
\[
  \inf_{\bx\in\operatorname{supp}
    (\Dcal_{k}^d)}\norm{\bx}_2
  \geq\frac\alpha8,
  \qquad
  \sup_{\bx\in\operatorname{supp}
    (\Dcal_{k}^d)}\norm{\bx}_2
  \leq\sqrt d\,\alpha.
\]
After adjusting universal constants, these bounds are respectively
$2^{c(k-1)d^3}$ and
$2^{C(k-1)d^3}$. For any coordinate and almost all fixed values of the other $d-1$
coordinates, the line obtained by varying that coordinate intersects some $m\geq1$ supporting cubes in disjoint intervals, each of length $2\cdot4^{k-1}$.  Its conditional law is uniform on that union, so its density is
\[
  \frac1{2\cdot4^{k-1}
    m}
  \leq\frac18.
\]
This completes the proof.
\end{proof}

\end{document}